\documentclass[11pt]{article}
\usepackage{fullpage}
\usepackage{natbib}
\usepackage{authblk}
\usepackage{graphicx} 
\usepackage{bm}
\usepackage[utf8]{inputenc}
\usepackage[T1]{fontenc}
\usepackage{graphicx,amsmath,amsfonts,amssymb,bm,url,color,latexsym,amsthm}
\usepackage{natbib}
\usepackage[hidelinks]{hyperref}
\usepackage{wrapfig}
\usepackage[small,bf]{caption}
\usepackage[outdir=./]{epstopdf}
\usepackage{subcaption}
\usepackage{bbm}
\usepackage{microtype}
\usepackage[toc, page]{appendix}
\usepackage[ruled]{algorithm2e}
\usepackage{algpseudocode}
\usepackage{xr-hyper}
\usepackage[capitalize]{cleveref}
\usepackage{verbatim}
\usepackage{framed}
\usepackage{comment}
\usepackage{tikz}
\usepackage{fge}
\usepackage{mathdots}
\usepackage{enumitem}

%

\newtheorem{theorem}{Theorem}
\newtheorem{lemma}{Lemma}

\newtheorem{definition}{Definition}
\newtheorem{assumption}{Assumption}

\newtheorem{remark}{Remark}


\newcommand{\mb}{\mathbf}

\newcommand{\bb}{\mathbb}

\newcommand{\eps}{\varepsilon}

\newcommand{\E}{\bb E}




\newcommand{\norm}[2]{\left\| #1 \right\|_{#2}}
\newcommand{\abs}[1]{\left| #1 \right|}

\renewcommand{\P}{\mathbb{P}}

\DeclareMathOperator{\diag}{diag}

\DeclareMathOperator{\rank}{rank}

\DeclareMathOperator{\KL}{KL}

\DeclareMathOperator*{\argmin}{argmin}

\DeclareFontFamily{U}{mathx}{\hyphenchar\font45}
\DeclareFontShape{U}{mathx}{m}{n}{
      <5> <6> <7> <8> <9> <10>
      <10.95> <12> <14.4> <17.28> <20.74> <24.88>
      mathx10
      }{}
\DeclareSymbolFont{mathx}{U}{mathx}{m}{n}
\DeclareFontSubstitution{U}{mathx}{m}{n}
\DeclareMathAccent{\widecheck}{0}{mathx}{"71}


\newcommand{\bs}{\boldsymbol}

\author[1]{Dian Jin\thanks{dj370@scarletmail.rutgers.edu}}
\author[1]{Yuqian Zhang\thanks{yqz.zhang@rutgers.edu}}
\author[2]{Qiaosheng Zhang\thanks{zhangqiaosheng@pjlab.org.cn}}

\affil[1]{Department of Electrical and Computer Engineering, Rutgers University, New Brunswick}
\affil[2]{Shanghai Artificial Intelligence Laboratory}
\begin{document}
\title{Community Detection for Contextual-LSBM: Theoretical Limitations
of Misclassification Rate and Efficient Algorithms}
\maketitle
\begin{abstract}
   The integration of network information and node attribute information has recently gained significant attention in the community detection literature. In this work, we consider community detection in the Contextual Labeled Stochastic Block Model (CLSBM), where the network follows an LSBM and node attributes follow a Gaussian Mixture Model (GMM). Our primary focus is the misclassification rate, which measures the expected number of nodes misclassified by community detection algorithms. We first establish a lower bound on the optimal misclassification rate that holds for any algorithm. When we specialize our setting to the LSBM (which preserves only network information) or the GMM (which preserves only node attribute information), our lower bound recovers prior results. Moreover, we present an efficient spectral-based algorithm tailored for the CLSBM and derive an upper bound on its misclassification rate. Although the algorithm does not attain the lower bound, it serves as a reliable starting point for designing more accurate community detection algorithms (as many algorithms use spectral method as an initial step, followed by refinement procedures to enhance accuracy).
\end{abstract}

\section{Introduction}

The community detection problem \cite{girvan2002community,newman2007mixture,fortunato2010community,karrer2011stochastic} is one of the fundamental challenges in the field of network science and data analysis. It focuses on identifying and reconstructing the underlying community structure within a network or graph. Accurately identifying these communities is crucial for understanding the structure and function of complex systems, ranging from social networks and biological systems to communication networks and financial markets.


Among the various proposed models, the Stochastic Block Model (SBM) \cite{holland1983stochastic} stands out as one of the most significant and extensively studied in recent decades. Given parameters $n$ and $K$, according to the definition of the SBM, each node with index $i \in [n]$ belonging to the set of nodes $\mathcal{V}$ is independently assigned to the $k$-th community (where $k \in [K]$) with probability $\alpha_k$. Denote $\sigma:[n]\rightarrow [K]$ as the ground truth indicator function such that $\sigma(i)=k$ if the node $i$ belongs to community $k$, thus the community detection problem is to \textbf{infer} the ground truth indicator function $\sigma(\cdot)$ from the observed graph (usually represented by an adjacency matrix $\mb A$). Here the probability of observing an edge $\mb A_{ij}$ between any two nodes $i, j \in [n]$ is characterized by $\mathbb{P}_{\sigma(i), \sigma(j)}$.
In addition to the basic SBM, many variants have been proposed and investigated, such as weighted/labeled SBM \cite{jog2015information,aicher2015learning,nowicki2001estimation,yun2016optimal}, degree-corrected SBM \cite{karrer2011stochastic,gao2018community,yan2014model,qin2013regularized}, overlapping/mixed-membership SBM \cite{airoldi2008mixed,latouche2014model,fu2009dynamic,xing2010state}, hypergraph SBM \cite{ghoshdastidar2014consistency,Zhang2023exact,cole2020exact,Nandy2024degree}, latent space model \cite{hoff2002latent,rohe2011spectral,bing2020adaptive,Rohe2023varimax,bing2023optimal}. Among the aforementioned variants, in this paper we focus on the \emph{labeled SBM (LSBM)} \cite{heimlicher2012community,yun2016optimal}, an extension of the SBM in terms of the non-binary observed edges. Compared to the basic SBM, the non-binary edge formulation provides enhanced capability in modeling diverse types of interactions between any pair of nodes.

In the community detection literature, many existing methods rely solely on topological information, such as adjacency or Laplacian matrices. However, real-world applications often require the analysis of contextual networks, as side information like node attributes can play a crucial role in uncovering node labels within communities. Significant research has been dedicated to this area, as evidenced by works such as \cite{zhang2016community, abbe2022lpPCA, braun2022iterative, yan2021covariate}. Specifically, \cite{zhao2021optimal,jin2021exact,abbe2022lpPCA, dreveton2024exact, braun2022iterative} investigate the threshold for \emph{exact recovery} (equavilent to  \emph{strong consistency} \cite{abbe2018community}) of the Contextual-SBM, with \cite{zhao2021optimal,jin2021exact,abbe2022lpPCA} focusing on the special case of two communities and \cite{dreveton2024exact, braun2022iterative} extending the results to the more general case of \( K \) communities. While exact recovery is a valid performance metric that has received significant theoretical attention, it requires the number of misclassified nodes to be \emph{exactly zero}, which is overly stringent and often impractical. In contrast, most real-world applications of community detection or node classification often prioritize \emph{weak consistency}, where a small fraction of nodes is allowed to be misclassified. This  aligns more closely with practical needs, where partial misclassification is acceptable and more realistic.

Motivated by practical considerations, in this paper, we investigate a model that integrates the LSBM with contextual Gaussian node attributes, referred to as \emph{Contextual-LSBM (CLSBM)}. Our focus is on the relationship between the \emph{misclassification rate} and the model parameters. Specifically, let \( s \) represent the expected number of misclassified nodes. Our primary objective is to establish the lower bound of the misclassification rate for the CLSBM when \( s = o(n) \). While this result has been established for the LSBM in \cite[Theorem 1]{yun2016optimal}, our \cref{thm:main} is the first to extend this result to the setting where contextual node attributes are present. Notably, \cite{braun2022iterative} and \cite{dreveton2024exact} also investigate contextual models similar to ours.  However, neither \cite{braun2022iterative} nor \cite{dreveton2024exact} allows edges to carry labels, and \cite{dreveton2024exact} focuses specifically on achieving exact recovery. These can be viewed as special cases or reductions of our more general results (see \cref{thm:main} and \cref{remark_D}). In addition to the lower bound, we also design an efficient community detection algorithm for CLSBM. The algorithm begins by aggregating topological and contextual information into a latent factor model and then employs a spectral-based method to consistently estimate the community assignments for each node. 
\subsection{Organization}
The remainder of the paper is organized as follows: In \cref{sec:main_contri}, we review related literature and highlight our main contributions. In \cref{sec:model_def}, we formalize the model definitions and discuss the assumptions underpinning for our theoretical findings. \cref{sec:lowerbound} provides the theoretical lower bound for the number of misclassified nodes in the model and explores its connections with earlier work. Furthermore, \cref{subsec:spec_meth} introduces an efficient spectral community detection algorithm for the proposed aggregated factor model and presents theoretical guarantees for its performance. Finally we conclude our results and future directions in \cref{sec:concl}.
\subsection{Notations}
In this paper, bold lowercase letters, such as $\mb x$, represent vectors, while bold uppercase letters, like $\mb X$, denote matrices and $3$-dimensional arrays. For a matrix $\mb X$, $\mb X_{ij}$ indicates the element located at the $i$-th row and $j$-th column, with $\mb X_{i\cdot}$ and $\mb X_{\cdot j}$ representing the $i$-th row and $j$-th column of $\mb X$, respectively. For a $3$-dimentional array $\mb X$, we use $\mb X(i,j,k)$ to indicate the element at indices $i, j, k$ for each dimension. Occasionally, $\mb x_j$ is used as a simplified notation for $\mb X_{\cdot j}$. 

The operators of probability and expectation are denoted by $\mathbb P$ and $\mathbb E$, respectively. For any integer $K \in \mathbb N$, the set $\{1, 2, 3, \cdots, K\}$ is represented by $[K]$. The notation $\norm{\mb x}{q}$, $\norm{\mb X}{F}$, $\norm{\mb X}{op}$ represent the $\ell_p$ norm of a vector $\mb x$, Frobenius norm and operator norm of matrix $\mb X$ respectively. 
Here we use $\mb 1(\cdot)$ to display the indicator function. For two sequences $a_n$ and $b_n$, the notation $a_n \lesssim b_n$ means $a_n \leq C b_n$ for some constant $C$.


\section{Related work and main contributions}\label{sec:main_contri}
The threshold for exact recovery in the weighted/labeled SBM was first investigated by~\cite{jog2015information}, which extends the results established in \cite{zhang2016minimax} for the SBM. Moreover, \cite{yun2016optimal} derived the optimal  misclassification rate for the LSBM and introduced a two-stage algorithm to achieve the information-theoretic limit. Besides, the Contextual-SBM, a generalization of SBM that incorporates additive node attributes to provide further information for identifying communities, has been explored in various studies \cite{braun2022iterative,deshpande2018contextual,dreveton2024exact,abbe2022lpPCA}. However, to the best of our knowledge, the optimal misclassification rate for models with general non-binary labeled edges combined with independent arbitrary node attributes still remains unexplored. Our \cref{thm:main} establishes that the lower bound for the expected number of misclassified nodes (as defined in \cref{def:misrate}) is $n\exp\left(-(1+\epsilon_n) D(\bs\alpha,\mb P, \bs\mu) n\right)$ for some sequence $\epsilon_n$ approaching zero. Here the key quantity $D(\bs\alpha, \mb P, \bs \mu)$ (defined in \eqref{def:D}) is the divergence defined based on the minimal value for the sum of two KL divergences, representing the topological and attribute information respectively. Furthermore, \cref{remark_D} reveals a connection between the minimal value of sum of KL divergences over constrained sets and the Chernoff divergence, extending the results of \cite[Claim $4$]{yun2016optimal} to our settings and consequently prove that the aformentioned lower bound indeed match the optimal misclassification rate in the LSBM (\cite{zhang2016minimax, yun2016optimal}) and the Gaussian mixture model (\cite{lu2016statistical}). Our main technical contribution stems in the analysis to the problem \eqref{def:D}. It is worth noting that, while the optimal solution for \(D_A\) has been investigated in \cite{yun2016optimal}, the impact of the newly introduced term \(D_X\) on the main results remains unclear. The additional term introduces further challenges, as it involves in solving an optimization problem in function spaces, which is significantly more complex. In addition to establishing the theoretical lower bound, we also apply spectral-based methods to the proposed aggregated latent factor model, achieving a polynomial misclassification rate. Despite the suboptimality of the result, the output of the spectral-based method provides a valuable initialization for further refinement in a subsequent step (e.g., \cite{gao2018community, yun2016optimal, braun2022iterative}).

\section{Model description and main result}

\subsection{Model definition}\label{sec:model_def}
We largely follow the notation introduced in \cite{yun2016optimal} to describe the LSBM. Denote $\mathcal{V}$ as the set of nodes constrained to $\textit{card}(\mathcal{V})=n$ and let $\{\mathcal{V}_k\}_{k\in[K]}$ represent the $K$ disjoint communities. Here each node is independently assigned to a community \(\mathcal{V}_k\) with probability \(\alpha_k\), where \(k \in [K]\) and $\mb 1^T\mb \alpha=1$, and this assignment is independent of the number of nodes \(n\). Each edge \((i, j) \in \mathcal{V}^2\) is assigned to a label \(l \in \{0,1,\cdots, L\}\), where the probability of labeling depends on \(i\), \(j\), and \(l\), and is denoted by \(\mb{P}(\sigma(i), \sigma(j), l)\). Here, \(\mb{P} \in \mathbb{R}^{K \times K \times (L+1)}\), and \(\sigma(\cdot)\) is the ground truth indicator function. We summarize the above in the definition of LSBM as follows:

\begin{definition}[LSBM]
    Let  $L\geq 1$, $K\geq 2$, $\bs \alpha\in \bb R_{+}^{K}$, $\mb P\in \mathbb R_{+}^{K\times K\times(L+1)}$ and
    we denote $\left\{\mb A_l\right\}_{l\in \{0,1,\cdots, L\}}\sim \textit{LSBM} \left(L, n, K, \mb P,\bs \alpha,\sigma(\cdot)\right)$ if $\forall l\in \{0,1,\cdots, L\}$ $\mb A_l\in \left\{0, 1\right\}^{n\times n}$ is symmertical and its upper triangular elements are independent, such that:
    \begin{align}
       &\P(\mb A_l(i,j)=1) =\mb P(\sigma(i),\sigma(j),l),~\sum_{l=0}^{L}\mb A_l(i,j)=1,\nonumber\\
       &\P(\sigma(i)=k)=\alpha_k,
    \end{align}
    $\forall i\neq j\in [n],\,k\in [K]$.
    Here we assume $\mb A_0(i,i)=1$ for all $i\in [K]$ to eliminate the self-loops within the graph.
    
\end{definition}
In addition to topological information, contextual node attributes are also incorporated into the final model. We assume that each node $i$ is associated with an attribute vector $\mathbf{x}_i \in \mathbb{R}^d$, and the graph $\left\{\mb A_l\right\}_{l\in \{0,1,\cdots, L\}}$ is considered independent of the sets of attribute vectors $\{\mathbf{x}_i\}_{i\in [n]}$ given the ground truth indicator function $\sigma(\cdot)$. This setting was initially proposed in \cite{zhang2016community} and has since been discussed in subsequent works \cite{binkiewicz2017covariate,yan2021covariate,weng2016community,deshpande2018contextual}.

\begin{definition}[CLSBM]
Let $d\geq 1$, $L\geq 1$, $K\geq 2$, $\bs \alpha\in \bb R_{+}^K$, $\mb P\in \mathbb R_{+}^{K\times K\times(L+1)}$, $\bs \mu\in \mathbb R^{d\times K}$, and we denote $(\left\{\mb A_l\right\}_{l\in \{0,1,\cdots, L\}}, \left\{\mb x_i\right\}_{i\in [n]})\sim \textit{CLSBM}(L, d, n, K, \mb P,\bs \alpha,\sigma(\cdot))$, if $\{\mb A_l\}_{l\in \{0,1,\cdots, L\}}\sim \textit{LSBM} \left(L, n, K, \mb P,\bs \alpha,\sigma(\cdot)\right)$, and $\left\{\mb x_i\right\}_{i\in [n]}\sim \mathcal{N}(\boldsymbol{\mu}_{\sigma(i)}, \mb I_d)$.
    
\end{definition}
Let $\bar{\sigma}(\cdot)$ be an assignment vector function that indicates the assignment for each node and is estimated from the observed edge labels $\{\mb A_l\}_{l\in \{0,1,\cdots, L\}}$ and node attributes $\left\{\mb x_i\right\}_{i\in [n]}$.
In the next section, we derive an information-theoretic lower bound for the expected number of misclassified nodes, valid for all the possible choices of $\tilde{\sigma}(\cdot)$, defined as follows
\begin{align}\label{def:misrate}
\bar{s}:=\inf_{\substack{\pi\in S_K\\
\tilde{\sigma}\in \mathcal{F}_{I}}}~\E\left[\sum_{i\in [n]} \mb 1_{\{\pi(\tilde{\sigma}(i))\neq \sigma(i)\}}\right].
\end{align}
 Here, \( S_K \) represents the set of all permutations of \([K]\), and \(\mathcal{F}_{I}\) denotes the set of all indicator functions. Before presenting the main results, we first outline our assumption as follows.
\begin{assumption}\label{ass:1}
$\exists~ \eta_1,\eta_2>0$ such that 
    \begin{align}
        &\forall l\in\{0,1,\cdots, L\},~\forall i,j,k\in [K],\quad \frac{\mb P(i,j,l)}{\mb P(i,k,l)}\leq \eta_1,\quad\textit{and}\quad\norm{\bs \mu_{i}}2\leq \eta_2.
    \end{align}
\end{assumption}
Although counterintuitive, the aforementioned assumption implies that the signal contributed by any one of the 
$K$ communities cannot be excessively strong for both the network and node attributes. This assumption arises from the limitations of our proof techniques, and it could potentially be eliminated with improved proof mechanisms. Specifically, the assumption for \(\bs{\mu}\) is newly introduced here to bound the second-order moments of the pseudo-likelihood estimator for the perturbed model in the proof of \cref{thm:main}. Similarly, the assumption regarding \(\mb{P}\) originates from \cite{yun2016optimal}, serves the the same purpose. It is also worth mentioning that similar assumptions on \(\bs{\mu}\) are commonly found in previous literature, such as \cite{abbe2022lpPCA}.

\subsection{Lower bound on the number of misclassification nodes}\label{sec:lowerbound}
In this section we provide one of our main results. We start with the definition of $D(\bs \alpha,\mb P,\boldsymbol{\mu})$:
\begin{align}\label{def:D}
    &D(\bs \alpha,\mb P,\boldsymbol{\mu}):=\min_{k_1\neq k_2\in [K]}\min_{\substack{
    \mb q_{A}\in \mathbb R_{+}^{K\times L}\\
    q_{X}\in \mathcal{F}
    }} D_{A}(k_1, k_2,\mb P, \mb q_{A})+\frac{1}{n}D_{X}(k_1, k_2,\bs \mu, q_{X}),
    \end{align}
where \begin{align}\label{def:Dx_Da}
    &D_{A}(k_1, k_2,\mb P, \mb q_{A}):=\max\biggl\{\sum_{k=1}^K\alpha_k \KL_{A}\left(\mb q_{A}(k)~\Vert~ \mb P(k, k_1)\right),\sum_{k=1}^K\alpha_k\KL_{A}\left(\mb q_{A}(k)~\Vert~ \mb P(k, k_2)\right)\biggr\},\nonumber\\
    & D_{X}(k_1, k_2,\bs \mu, q_{X}):=\max\biggl\{ \KL_{X}\left(q_{X}~\Vert~ p_{ X}(k_1)\right),\KL_{X}\left(q_{X}~\Vert~p_{ X}(k_2)\right)\biggr\}.
\end{align}
Here $\KL_{A}$ and $\KL_{X}$ represent the KL divergence for discrete and continuous probability respectively, and $\mathcal{F}$ represents the set of probability density function (PDF) such that $\mathcal{F}:=\left\{f:\mathbb R^{d}\rightarrow \mathbb R,\,\textit{s.t}\, \int_{x} f(x) dx=1 \right\}$,  and $p_{ X}(k)$ is denoted as the PDF of $\sim \mathcal{N}(\bs \mu_{k},\mb I_d)$.
The following theorem then builds the lower bound of the number of misclassified nodes in expectation with the aforementioned divergence $D(\bs \alpha,\mb P,\boldsymbol{\mu})$:
\begin{theorem}[Lower bound]\label{thm:main}
Denote $\bar{p}:=\max_{i,j,l\geq 1}\mb P(i,j,l)$, grant \cref{ass:1} and assume $\bar{p}=\omega(1/n)$, $\bar{p}=o(1)$, and $\eta_2=o(n)$. Let $s=o(n)$. If there exists an algorithm that asymptotically has fewer misclassified nodes than $s$ in expectation, i.e., $\limsup_{n\rightarrow \infty}\frac{\bar{s}}{s}\leq 1$, then we have 
    \begin{align}
        \liminf_{n\rightarrow \infty} \frac{n D(\bs \alpha, \mb P, \boldsymbol{\mu})}{\log(n/s)}\geq 1,
    \end{align}
    where  $\bar{s}$ is defined in \eqref{def:misrate}.
\end{theorem}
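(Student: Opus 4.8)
The plan is to prove this converse by a genie-aided reduction to a collection of per-node binary hypothesis tests, followed by a change-of-measure (Chernoff-type) lower bound on each test error. First I would fix the pair $(k_1,k_2)$ achieving the outer minimum in \eqref{def:D} and a node $v$ genuinely in community $k_1$. Because any estimator's error probability on $v$ cannot be smaller than the Bayes error, and the Bayes error in turn cannot be smaller than the error of a genie-aided decision that is additionally handed all labels $\{\sigma(u)\}_{u\neq v}$, it suffices to lower bound this genie-aided error, which is exactly a binary test between $\sigma(v)=k_1$ and $\sigma(v)=k_2$. Since $\alpha_{k_1},\alpha_{k_2}$ are constants, there are $\Theta(n)$ such nodes, so summing the per-node test error gives
\begin{equation*}
\bar s \;\gtrsim\; n\cdot \inf_{v}\;\P\big[\text{genie-aided test confuses } k_1 \text{ and } k_2 \text{ at } v\big].
\end{equation*}

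Conditioned on $\{\sigma(u)\}_{u\neq v}$, the relevant observations are the edge labels $\{\mb A_l(v,u)\}$ and the attribute $\mb x_v$, and the log-likelihood ratio splits additively into a network part $\sum_{u\neq v}\sum_l \mb A_l(v,u)\log\frac{\mb P(k_1,\sigma(u),l)}{\mb P(k_2,\sigma(u),l)}$ and an attribute part $\log\frac{p_X(k_1)(\mb x_v)}{p_X(k_2)(\mb x_v)}$. The core of the argument is a change of measure: introduce the tilted (perturbed) model $\Q$ in which edges from $v$ to a community-$k$ neighbor follow $\mb q_A(k)$ and the attribute follows $q_X$, where $(\mb q_A,q_X)$ is the inner minimizer in \eqref{def:D} for $(k_1,k_2)$. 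This minimizer equalizes the two KL divergences in \eqref{def:Dx_Da}, so under $\Q$ the two hypotheses are asymptotically balanced; the Chernoff-information identity that $\min_q\max\{\KL(q\Vert p_1),\KL(q\Vert p_2)\}$ equals the Chernoff divergence (cf. \cref{remark_D}) then identifies the exponential rate of the test error with exactly $nD_A+D_X = nD(\bs\alpha,\mb P,\bs\mu)$.

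To make the exponent sharp I would carry out the reverse large-deviation estimate: transport the error event from $\P$ to $\Q$, paying the divergence $nD$ in the Radon--Nikodym derivative, and then show that under $\Q$ the test still errs with at least polynomially small probability. This last step is where \cref{ass:1} is essential and where I expect the main difficulty to lie. Controlling the fluctuations of the network log-likelihood ratio requires a second-moment bound, supplied by $\mb P(i,j,l)/\mb P(i,k,l)\le\eta_1$, while controlling the attribute ratio requires $\norm{\bs\mu_i}2\le\eta_2$. The genuinely new obstacle, absent from the LSBM analysis of \cite{yun2016optimal}, is the term $D_X$: because $q_X$ ranges over the infinite-dimensional set $\mathcal F$, I must verify that the minimizer exists, is a bona fide density, and yields finite balanced divergences, and then bound the second-order moments of the associated pseudo-likelihood. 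I expect to resolve this by exploiting the Gaussian form of $p_X(k)$, which makes the optimal tilt and its moment bounds explicit.

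Finally I would assemble the pieces. Combining the per-node bound with concentration of the empirical community proportions around $\bs\alpha$ (so that the realized network divergence matches $D_A$ up to a $1+o(1)$ factor), and absorbing the permutation $\pi$ in \eqref{def:misrate} (asymptotically pinned down because $s=o(n)$ forces all but a vanishing fraction of labels to be correct), yields
\begin{equation*}
\bar s\;\ge\; n\exp\!\big(-(1+o(1))\,nD(\bs\alpha,\mb P,\bs\mu)\big).
\end{equation*}
Imposing $\limsup \bar s/s\le 1$ and taking logarithms gives $(1+o(1))\,nD\ge \log(n/s)+O(1)$; since $s=o(n)$ makes $\log(n/s)\to\infty$, the additive constant is negligible and we conclude $\liminf_{n\to\infty} nD/\log(n/s)\ge 1$.
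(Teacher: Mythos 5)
Your proposal is correct in substance but is architected differently from the paper's proof. You run a genie-aided, per-node Bayes-risk decomposition in the style of \cite{zhang2016minimax}: hand the decoder all labels $\{\sigma(u)\}_{u\neq v}$, lower bound each node's error by a two-point change-of-measure (Chernoff) bound obtained by tilting to the minimizer $(\mb q_A^\star, q_X^\star)$, and sum over the $\Theta(n)$ relevant nodes to get the unconditional bound $\bar s \geq n\exp\left(-(1+o(1))\,nD(\bs\alpha,\mb P,\bs\mu)\right)$ before invoking $\limsup \bar s/s\le 1$. The paper instead follows \cite{yun2016optimal}: it constructs a single perturbed generative model $\Psi$ that retilts only the designated node $i^\star=\argmin_{\sigma(i)\in\{k_1^\star,k_2^\star\}} i$, imports inequality (17) of \cite{yun2016optimal} as a necessary condition relating $\log(n/s)$ to $\E_{\Psi}\left[\log\frac{\mathrm{d}\P_{\Psi}}{\mathrm{d}\P_{\Phi}}\right]$ and its variance, and then bounds these two moments, reusing the network estimates of \cite{yun2016optimal} (the term $\Gamma_1$) so that only the attribute contribution ($\Gamma_2$) requires new analysis. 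The analytic core coincides in both routes: the equalization property of the optimizer (the paper's \cref{lem:D}, your ``balanced hypotheses under $\Q$''), the explicit Gaussian tilt $q_X^\star$, and \cref{ass:1} for second-moment control. Your route is more modular and self-contained and makes the role of equalization transparent; the paper's route minimizes new work on top of the existing LSBM machinery and, incidentally, needs no pinning of the permutation at all, since the infimum in \eqref{def:misrate} is over fixed $\pi\in S_K$, so $\pi\circ\tilde\sigma$ is itself an estimator.

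One caveat you should repair: your claim that under $\Q$ the test ``errs with at least polynomially small probability'' is too weak as stated. A factor $n^{-c}$ in the under-$\Q$ error adds $c\log n$ to the exponent, which destroys the $(1+o(1))\,nD$ rate whenever $nD = O(\log n)$ --- e.g.\ for $s$ polynomial in $n$, which is precisely the regime of interest. What the argument requires is an under-$\Q$ error probability bounded below by a constant (equivalently, $e^{-o(\log(n/s))}$), which is what your own Chebyshev/second-moment mechanism actually delivers once \cref{ass:1} bounds the variance of the log-likelihood ratio; this is the analogue of the constant $\log(2/\alpha_{k_1})$ slack term in the paper's inequality \eqref{ieq:proof_prt1_proof_s}.
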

\begin{proof}
    We defer the proof to \cref{sec:proof_thm_main}.
\end{proof}
\cref{thm:main} implies that the expected number of misclassified nodes for the proposed CLSBM is at least of the order $n \exp(-n D(\bs \alpha, \mb P, \boldsymbol{\mu}))$. 
The following remark indicates that the proposed measure $D(\bs \alpha, \mb P, \boldsymbol{\mu})$ generalizes both the CH-divergence for the LSBM \cite[Claim 4]{yun2016optimal}  and the minimal distance between the centers of Gaussian mixtures.
\begin{remark}\label{remark_D}
    When $\bar{p}=o(1)$, for CLSBM, one has
    \begin{align}
        &D(\bs \alpha, \mb P, \boldsymbol{\mu})\nonumber\\
        &\!\!=\min_{k_1\neq k_2\in [K]}\max_{t\in [0,1]}\sum_{k=1}^K\alpha_k\biggl[\sum_{l=1}^{L}\biggl((1-t)\mb P(k, k_1,l)+t \mb P(k, k_2,l)-\mb P^{1-t}(k, k_1,l)\mb P^{t}(k, k_2,l)\biggr)\!\!+\frac{t(1-t)}{2n}\norm{\bs \mu_{k_1}-\bs\mu_{k_2}}2^2\biggr].
    \end{align}
The proof is deferred \cref{proof_remark_D}. When the contextual node attributes are absent (i.e., \(\bs{\mu} = 0\)), the divergence \(D(\bs{\alpha}, \mb{P}, \bs{\mu})\) simplifies to the CH-divergence introduced in \cite{yun2016optimal}. On the other hand, in cases where topological information is missing (e.g., \(\mb{P}(k, k_1, l) = \mb{P}(k, k_2, l)\) for all \(k, k_1, k_2, l\)), we have \(D(\bs{\alpha}, \mb{P}, \bs{\mu}) = \frac{1}{8n} \|\bs{\mu}_{k_1} - \bs{\mu}_{k_2}\|_2^2\), which corresponds to threshold for the Gaussian mixture model, as shown in \cite[Theorem 3.3]{lu2016statistical}.
\end{remark}

\subsection{Spectral community detection for latent factor model}\label{subsec:spec_meth}
In this section, we extend the proposed CLSBM to a latent factor model and demonstrate that the spectral-based method can consistently estimate the proposed estimator \( s(\hat{\sigma}) \) defined in \eqref{def:spec_esti}, based on the adjacency matrices \( \{\mb A_{l}\}_{l\in \{0,1,2,\dots, L\}} \) and contextual node attribute vectors \( \{\mb x_i\}_{i\in [n]} \). Specifically, we first outline the construction of the estimator by combining the topological information with the contextual node attributes and provide an intuitive explanation of the spectral-based method in Section \ref{subsubsec:aggr}. Subsequently, in the following section, we formalize the spectral-based method in Section \ref{subsubsec:alg_theo} and establish its consistency, as proven in \cref{lem:spec}, thereby demonstrating its effectiveness.



\subsubsection{Aggregated latent factor model}\label{subsubsec:aggr}
The latent factor model demonstrates its capability in revealing hidden low-dimensional structures within observed high-dimensional data and has been widely applied in various domains, including PCA, Gaussian mixtures, nonnegative matrix factorization, sparse factor analysis and so on. This provides an intuitive basis for combining both the topological model and contextual node attributes into a shared latent space, as both the adjacency matrix and the node attributes share the same eigenspace spanned by the ground truth assignment matrix $\mb Z(\sigma) \in \mathbb{R}^{n \times k}$, where $\mb Z_{i\cdot} := \mb e_{\sigma(i)}^T$. To elaborate, we begin by defining our aggregator factor matrix $\mb S$: Given weights $\{w_l\}_{l \in [L]}$ i.i.d sampled from a uniform distribution over $[0,1]$, we define:
\begin{align}\label{def:S}
    \mb S := \sum_{l=1}^{L} w_l \mb A_{l} + \frac{1}{n} \mb X^T \mb X,
\end{align}
where $\mb X \in \mathbb{R}^{d \times n}$ is a matrix whose columns represent the collection of $n$ observed node attributes. The weights $\{w_l\}_{l \in [L]}$ are employed to identify signals for different labels $l \in [L]$, and the factor $1/n$ balances the signal-to-noise ratio (SNR) between the graph signal and the Gaussian signal, as motivated by \cref{remark_D}. Let $\mb P_s := 1/(2L) \sum_{l=1}^L \mb P(\cdot,\cdot,l) \in \mathbb{R}^{K \times K}$. At the population level, the following holds:
\begin{align}\label{eqn:exp_s}
    \mathbb{E}\left[\mb S\right] &= \mathcal{H}(\mb Z \mb P_s \mb Z^T) + \frac{1}{n} \mb Z \bs \mu^T \bs \mu \mb Z^T \nonumber \\
    &= \mb Z \mb P_s \mb Z^T - \diag(\mb Z \mb P_s \mb Z^T) + \frac{1}{n} \mb Z \bs \mu^T \bs \mu \mb Z^T \nonumber \\
    &= \underbrace{\mb Z \left(\mb P_s + \frac{1}{n} \bs \mu^T \bs \mu \right) \mb Z^T}_{\mb M} - \diag(\mb Z \mb P_s \mb Z^T).
\end{align}
Here, $\mathcal{H}(\cdot)$ denotes the hollowing operation, which sets the diagonal elements of the matrix to zero, and $\mb Z$ is shorthand for $\mb Z(\sigma)$. It is evident that the first term $\mb M$ in $\mathbb{E}\left[\mb S\right]$ is a symmetric low-rank factor model when $K<n$, which has been extensively studied in areas such as low-rank matrix decomposition, factor analysis, signal processing, and spectral clustering. By definition, the optimal solution of the $k$-means algorithm applied to the columns/rows of $\mb M$ should yeild the ground truth assignment $\mb Z$. In practice, however, instead of directly applying $k$-means to the columns of $\mb M$ (which has dimension $n$), spectral methods \cite{newman2006modularity} are often pre-employed due to their computational effficiency. Specifically, let $\mb U_K\in \mathbb R^{n\times K}$ denote the top-$K$ eigenvectors of $\mb M$. We apply $k$-means to the columns of $\mb U_K^T \mb M$. In fact applying $k$-means to the columns of $\mb M$ or $\mb U_K^T \mb M$ yeilds equivalent clustering outcomes, as $\mb U_K$ is an orthonormal matrix that preserves distances during projection. Chooseing $\mb U_K^T \mb M$ is motivated by its computational efficiency for clustering, particularly because $K$ is typically much smaller than $n$. However, since neither $\mb M$ nor $\mb U_K$ is observed even at the population level, we substitute $\mb S$ for $\mb M$ and employ its top-$K$ eigenvectors, denoted as $\bar{\mb U}_K$, in place of $\mb U_K$. Given that the second term in \eqref{eqn:exp_s} has a small operator norm, i.e., $\|\text{diag}(\mb Z \mb P_s \mb Z^T)\|_{\text{op}} = o(1)$, the misclassification rate of the $k$-means algorithm based on $\mb S$ depends on $\|\mb S - \mathbb{E}(\mb S)\|_F$, a quantity can be further bounded via some concentration inequalities. We summarize the aforementioned procedures in \cref{alg:sp}, refer to as Spectral Community Detection.


    


\begin{algorithm}
\caption{Spectral Community Detection}
\label{alg:sp}
\begin{algorithmic}[1]
\Require Node attributes $\left\{\mb x_i\right\}_{i\in [n]}$, adjacency matrices $\left\{\mb A_l\right\}_{l\in \{0,1,\cdots, L\}}$, and number of communities $K$.
\Ensure  Assignment for each node

\State \textbf{Step 1: Compute eigenvectors}
\Statex Calculate the eigen decomposition of $\mb S$ as proposed in \eqref{def:S}:
\begin{align}
    \mb S=\sum_{i=1}^{n}\sigma_i\bar{\mb u_i}\bar{\mb u_i}^T.\nonumber
\end{align}
and denote $\bar{\mb U}_K\in \mathbb{R}^{n\times k}$ as the collection of the top $K$ eigenvectors.

\State \textbf{Step 2: Apply $k$-means}
\Statex Denote $\mb Q:=\bar{\mb U}_K^T\mb S\in \mathbb R^{k\times n}$ and denote $\bar{\sigma}(\cdot)$ as the solution to $k$-means such that:
\begin{align}
    \bar{\sigma}(\cdot), \left\{\bar{\bs\theta}\right\}_{i\in [K]}=\argmin_{\substack{\sigma\in \mathcal{F}_a,\\\bs\theta_i\in \mathbb R^k} } \sum_{i\in [n]}\norm{\mb Q_{\cdot i}-\bs \theta_{\sigma(i)}}2^2.\nonumber
\end{align}

\State \textbf{Return:} Assignment indicator function $\bar{\sigma}(\cdot)$.
\end{algorithmic}
\end{algorithm}

\vspace{-.1in}
\subsubsection{Theoretical guarantee}\label{subsubsec:alg_theo}

Denote $s(\bar{\sigma})$ as the number of misclassified nodes for indicator function $\bar{\sigma}$ recovered from \cref{alg:sp} such that
\begin{align}\label{def:spec_esti}
    s(\bar{\sigma}):=\inf_{\pi\in S_K}~\E\left[\sum_{i\in n} \mb 1_{\{\pi(\bar{\sigma}(i))\neq \sigma(i))\}}\right].
\end{align}
The following lemma thus provides the upper bound of $s(\bar{\sigma})$.
\begin{lemma}\label{lem:spec}
    Grant the assumptions in \cref{thm:main} and further assume $\bar{p}=\omega (\log(n)/n)$, let $\bar{\sigma}$ be the solution of the  \cref{alg:sp}, we have
    \begin{align}
        s(\bar{\sigma})\lesssim \frac{K}{n\cdot \mathrm{SNR}}
    \end{align}
    with high probability, where  $\mathrm{SNR}:=\min_{k_1\neq k_2}\norm{\left(\mb P_s\right)_{k_1}-\left(\mb P_s\right)_{k_2}+\frac{1}{n}\left(\bs \mu_{k_1}-\bs \mu_{k_2}\right)^T\bs \mu}2^2$. Furthermore if $n\cdot \mathrm{SNR}=w(1)$ we obtain $s(\bar{\sigma})=o(1)$. 
\end{lemma}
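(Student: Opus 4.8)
The plan is to follow the standard three-stage template for analyzing spectral clustering: first concentrate the aggregated matrix $\mb S$ around the rank-$K$ signal $\mb M$, then transfer this bound to the estimated top-$K$ eigenspace by a Davis--Kahan argument, and finally convert the eigenspace/projection error into a misclassification count through the usual $k$-means error lemma. Throughout I would take $\mb M = \mb Z(\mb P_s + \tfrac1n\bs\mu^T\bs\mu)\mb Z^T$ as the population target and exploit \eqref{eqn:exp_s}, which shows $\E[\mb S] = \mb M - \diag(\mb Z\mb P_s\mb Z^T)$ with $\norm{\diag(\mb Z\mb P_s\mb Z^T)}{op} = o(1)$, so that $\mb M$ and $\E[\mb S]$ are interchangeable at the relevant scale. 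A preliminary step (a Chernoff bound on the multinomial community assignment) guarantees $n_k \gtrsim n$ for every $k$ with high probability, which I use repeatedly.

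\textbf{Step 1 (concentration).} I would bound $\norm{\mb S - \E[\mb S]}{op}$ by splitting $\mb S - \E[\mb S]$ into the topological fluctuation $\sum_{l=1}^L w_l(\mb A_l - \E\mb A_l)$ and the attribute fluctuation $\tfrac1n(\mb X^T\mb X - \E[\mb X^T\mb X])$. Since the weights $w_l$ are bounded and the edges are independent with success probability of order $\bar p$, a matrix Bernstein / sparse-random-matrix spectral estimate gives a bound of order $\sqrt{n\bar p}$; this is exactly where the strengthened assumption $\bar p = \omega(\log n/n)$ enters, ensuring concentration at this scale. For the attribute term I would write $\mb X = \bs\mu\mb Z^T + \mb G$ with $\mb G$ standard Gaussian and use random-matrix bounds on $\tfrac1n\mb G^T\mb G$ together with the cross terms, invoking $\norm{\bs\mu_k}2 \le \eta_2$ from \cref{ass:1}; the $\tfrac dn\mb I$ component is a scalar shift that leaves all eigenvectors unchanged and can be discarded. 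Combining yields control of $\norm{\mb S - \mb M}{op}$ with high probability.

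\textbf{Steps 2--3 (eigenspace perturbation and $k$-means).} Because $\mb M$ has rank $K$ with eigengap equal to its smallest nonzero eigenvalue $\lambda_K(\mb M)$, Davis--Kahan gives $\inf_{\mb O\in\bbO_K}\norm{\bar{\mb U}_K - \mb U_K\mb O}{F} \lesssim \sqrt K\,\norm{\mb S - \mb M}{op}/\lambda_K(\mb M)$, where I lower-bound $\lambda_K(\mb M)$ by relating the nonzero spectrum of $\mb M = \mb Z\wt{\mb P}\mb Z^T$ (with $\wt{\mb P} = \mb P_s + \tfrac1n\bs\mu^T\bs\mu$) to that of $\diag(\sqrt{n_k})\wt{\mb P}\diag(\sqrt{n_k})$. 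Since the algorithm clusters the columns of $\mb Q = \bar{\mb U}_K^T\mb S$, I would equivalently analyze the rank-$K$ truncation $\mb S_K = \bar{\mb U}_K\bar{\mb U}_K^T\mb S$ (distances are preserved under the orthonormal map $\bar{\mb U}_K$), controlling $\norm{\mb S_K - \mb M}{F} \lesssim \sqrt K\,\norm{\mb S - \mb M}{op}$ via the best-rank-$K$ inequality and Weyl. The standard $k$-means misclassification lemma then bounds the number of wrongly assigned nodes by $C\,\norm{\mb S_K - \mb M}{F}^2/\delta_{\min}^2$, where $\delta_{\min}^2 = \min_{k_1\neq k_2}\norm{\mb M_{\cdot i} - \mb M_{\cdot j}}2^2$ over $\sigma(i)=k_1,\sigma(j)=k_2$. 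A direct computation gives $\norm{\mb M_{\cdot i} - \mb M_{\cdot j}}2^2 = \sum_k n_k(\wt{\mb P}_{k,k_1} - \wt{\mb P}_{k,k_2})^2 \gtrsim n\cdot\mathrm{SNR}$, the right-hand side being precisely the quantity defining $\mathrm{SNR}$ after substituting $\wt{\mb P} = \mb P_s + \tfrac1n\bs\mu^T\bs\mu$. Assembling the three bounds and optimizing over the alignment permutation $\pi$ yields $s(\bar\sigma)\lesssim K/(n\cdot\mathrm{SNR})$ with high probability, and $n\cdot\mathrm{SNR}=\omega(1)$ forces $s(\bar\sigma)=o(1)$.

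\textbf{Main obstacle.} I expect the crux to lie in Steps 2--3 jointly: cleanly reconciling the scaled projection $\bar{\mb U}_K^T\mb S$ actually used by the algorithm with the unscaled eigenvector perturbation that Davis--Kahan controls, pinning down the eigengap $\lambda_K(\mb M)$, and, most delicately, showing that the two heterogeneous noise sources (sparse Bernoulli edges at scale $\sqrt{n\bar p}$ versus Gaussian attributes entering through the $1/n$ factor) are \emph{simultaneously} dominated by the separation $\delta_{\min}^2 \gtrsim n\cdot\mathrm{SNR}$ so that they combine into the single clean rate $K/(n\cdot\mathrm{SNR})$ rather than a cruder, $\bar p$-dependent bound. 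A secondary technical point is the usual care that the $\argmin$ in the $k$-means step is attained (or replaced by a constant-factor approximation) and that the small-probability failure events of the concentration estimates are absorbed into the ``with high probability'' conclusion.
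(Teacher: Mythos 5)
Your architecture coincides with the paper's: concentrate $\mb S$ around $\E[\mb S]$ via the same two-part split (the paper invokes \cite[Lemma 8.5]{coja2010graph} and \cite[Lemma B.1]{loffler2021optimality} for exactly these two pieces), pass to the best rank-$K$ approximation $\mb S^{(K)}=\bar{\mb U}_K\bar{\mb U}_K^T\mb S$ with $\norm{\mb S^{(K)}-\mb M}{F}\lesssim \sqrt{K}\,\norm{\mb S-\E[\mb S]}{op}\lesssim\sqrt{Kn}$, note that clustering the columns of $\mb Q=\bar{\mb U}_K^T\mb S$ is equivalent to clustering those of $\mb S^{(K)}$ (this is \cref{lem:rank}), and finish with a $k$-means counting bound of the form $s\lesssim \norm{\cdot}{F}^2/\delta^2$. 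Your Davis--Kahan step is superfluous on this route: once you work with $\mb S^{(K)}$ directly, no eigengap bound on $\lambda_K(\mb M)$ is needed, and the paper never uses one.

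The genuine gap is in your final assembly, and it is not cosmetic. Your separation computation is the correct one: writing $\wt{\mb P}=\mb P_s+\tfrac1n\bs\mu^T\bs\mu$, since $\norm{\mb Z\mb v}{2}^2=\sum_k n_k v_k^2$ and $n_k\asymp \alpha_k n$, you get $\delta_{\min}^2=\min_{k_1\neq k_2}\sum_k n_k\bigl(\wt{\mb P}_{k,k_1}-\wt{\mb P}_{k,k_2}\bigr)^2\asymp n\cdot\mathrm{SNR}$. But plugging this together with $\norm{\mb S^{(K)}-\mb M}{F}^2\lesssim Kn$ into the counting lemma gives $s(\bar\sigma)\lesssim Kn/(n\cdot\mathrm{SNR})=K/\mathrm{SNR}$, a factor of $n$ weaker than the claimed $K/(n\cdot\mathrm{SNR})$; your closing sentence asserts the lemma's rate, but it does not follow from your own intermediate bounds. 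Nor can the separation be sharpened to close the gap: $\norm{\mb Z\mb v}{2}^2\le n\norm{\mb v}{2}^2$ always, so $\delta_{\min}^2\lesssim n\cdot\mathrm{SNR}$ is the truth, not slack in your estimate. It is worth seeing how the paper obtains the missing factor of $n$: in its display computing $\delta$ it inserts $\bs\Sigma^{-1}\bs\Sigma$ and removes $\mb Z\bs\Sigma^{-1}$ by appealing to ``orthonormality of its columns,'' concluding $\delta=(1+o(1))\,n\min\norm{\cdots}{2}$. The columns of $\mb Z\bs\Sigma^{-1}$, however, have norm $1/\sqrt{n_k}$, not $1$ (the orthonormal matrix is $\mb Z\bs\Sigma^{-1/2}$), so that step inflates $\delta$ by roughly $\sqrt{n}$ --- precisely the factor separating your conclusion from the lemma. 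In short, your proposal carried out honestly proves only $s(\bar\sigma)\lesssim K/\mathrm{SNR}$, and your correct computation of $\delta_{\min}$ in fact exposes a scaling error in the paper's own proof of the stated bound.
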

    \vspace{-.1in}
\begin{proof}
    The proof is deferred to \cref{sec:proof_lem_spec}.
\end{proof}
\vspace{-.1in}
Lemma \ref{lem:spec} demonstrates that the proposed indicator \( s(\bar{\sigma}) \) is upper bounded by the inverse of the defined (SNR) in a polynomial manner, which corresponds to the minimum distance between any two columns in the matrix \( \mb P_s + \frac{1}{n} \bs \mu^T \bs \mu \). Rates of this nature have been extensively studied in the literature \cite{lei2015consistency, loffler2021optimality, zhang2024leave} for spectral-based methods under SBM or Gaussian mixtures, respectively. However, our work is the first to provide theoretical guarantees for spectral-based methods under the CLSBM. Despite achieving consistency, our analysis does not lead to the optimal rate presented in \cref{thm:main}, which is exponentially small. However, we believe that this estimate can serve as an effective initializer for potential subsequent refinement steps, as demonstrated in \cite{yun2016optimal, gao2018community}. By leveraging the spectral method's output as a starting point, further refinement techniques such as MLE can be applied to improve the accuracy of community detection, potentially bridging the gap between polynomial and exponential misclassification rates.
\vspace{-.05in}
\section{Conclusion}\label{sec:concl}
In this paper, we derive the theoretical lower bound for the misclassification rate in the Label-Stochastic Block Model with Gaussian node attributes. Our results also establish connections with previously introduced models, highlighting their broader applicability and usefulness. Beyond the theoretical lower bound, we propose a spectral method with theoretical guarantees for the proposed model, achieving a polynomial error rate in estimating community cluster assignments with high probability. In future work, we aim to explore the theoretical limitations of the spectral method to determine whether it can attain the optimal exponential rate.
 {
\bibliographystyle{abbrvnat}

}
\appendix
\section{Proof of \cref{thm:main}}\label{sec:proof_thm_main}
Mimicking the change of measure method in \cite{yun2016optimal} we adapt it to the CLSBM model and its perturbed version in this section. Denote $\Phi$ and $\Psi$ as the parameter for the true parameter model (CLSBM) and a potential perturbed parameter model respectively. To be more specific, the true ground truth model $\Phi$ first generates the cluster for each node based on $\bs \alpha$ such that $\P(\sigma(i)=k)=\alpha_k$ for the $i$-th node indepedently. For the topological information,  model $\Phi$ generates the observed labels $l\in [L]$ for two different nodes $i$, $j$ based on the $\mb P(\sigma(i),\sigma(j), l)$. For the contextual information, $\mb x_i$ is sampled from $\mathcal{N}(\bs \mu_{\sigma(i)}, \mb I_d)$.  We state the definition of $\Psi$ as follows:
\paragraph{Definition of the perturbed model $\Psi$}:
Denote $(k_1^\star, k_2^\star, \mb q_{A}^\star, q_{X}^\star)$ as the solve for $D(\bs\alpha, \mb P, \bs \mu)$ such that 
\begin{align}
    D(\bs\alpha, \mb P, \bs \mu)=D_{A}(k_1^\star, k_2^\star, \mb q_{A}^\star)+\frac{1}{n}D_{X}(k_1^\star, k_2^\star, q_{X}^\star)
\end{align}
The edge label generation mechanism for the topological information($\mb A$) in $\Psi$ is the same as in \cite{yun2016optimal} with replace $\mb q$ with $\mb q_{\mb A}^\star$, Here, we describe the generation of labels for the side information $\{\mb x_i\}_{i\in [n]}$ as follows: let $i^\star=\argmin_{\sigma(i)\in \{k_1^\star, k_2^\star\}} i$. 
\begin{itemize}
    \item When $i \neq i^\star$: $\mb x_i$ is sampled from $\mathcal N(\bs \mu_{\sigma(i)}, \mb I_d)$ which is the same as in $\Phi$.
    \item When $i = i^\star$: $\mb x_{i^\star}$ is sampled from the distribution whose pdf is $q_{\mb X}^\star$.
\end{itemize}

We denote $\log{\frac{\mathrm{d}\mathbb{P}_{\Psi}}{\mathrm{d}\mathbb P_{\Phi} }}$ as the pseudolikelihood ratio of the observed labels and contextual variables such that:   
\begin{align}\label{def:log-ratio}
\log{\frac{\mathrm{d}\mathbb{P}_{\Psi}}{\mathrm{d}\mathbb P_{\Phi} }}:=\sum_{i\neq i^{\star}}^{n} \log\frac{\mb q_{A}^\star(\sigma(i),\mb A_{i,i^{\star}})}{\mb P(\sigma(i), \sigma(i^\star),A_{i,i^{\star}})}+\log\frac{q_{X}^\star}{p_{ X}(\sigma(i^\star))}.
\end{align}


In the following two sections, we provide the detailed proof of theorem \ref{thm:main}, which  follows the proof in \cite{yun2016optimal}. The proof consists of two parts, the first part constructs a stochastic model $\Psi$ and lower bounds the misclassification error  $\E_{\Phi}(\epsilon^{\pi}(n))$ using the expectation of the proposed log-likelihood ratio $\log{\frac{\mathrm{d}\mathbb{P}_{\Psi}}{\mathrm{d}\mathbb P_{\Phi} }}$ defined in \eqref{def:log-ratio}; In the second part, we further estimate upper bounds for $\E_{\Psi}(\log{\frac{\mathrm{d}\mathbb{P}_{\Psi}}{\mathrm{d}\mathbb P_{\Phi} }})$ and  $\E_{\Psi}\left[\left(\log{\frac{\mathrm{d}\mathbb{P}_{\Psi}}{\mathrm{d}\mathbb P_{\Phi} }}-\E_{\Psi}[\log{\frac{\mathrm{d}\mathbb{P}_{\Psi}}{\mathrm{d}\mathbb P_{\Phi} }}]\right)^2\right]$ respectively, to complete the proof.
\subsection{Part 1}
Leverage \cref{lem:D} we assert there exist $(k_1^\star, k_2^\star, \mb q_{A}^\star, q_{X}^\star)$ such that: 
\begin{align}\label{eqn:D}
   D(\bs \alpha,\mb P,\boldsymbol{\mu})&=\sum_{k=1}^K\alpha_k \KL_{A}\left(\mb q_{A}^\star(k), \mb P(k, k_1^\star)\right)+ \frac{1}{n}\KL_{X}\left(q_{X}^\star, p_{ X}(k_1^\star)\right)\nonumber\\
   &=\sum_{k=1}^K\alpha_k\KL_{A}\left(\mb q_{A}^\star(k), \mb P(k, k_2^\star)\right)+\frac{1}{n}\KL_{X}\left(q_{X}^\star, p_{ X}(k_2^\star)\right)
\end{align}
    Recall the definition of the log ratio \cref{def:log-ratio} we obtain:
    \begin{align}
\log{\frac{\mathrm{d}\mathbb{P}_{\Psi}}{\mathrm{d}\mathbb P_{\Phi} }}:=\sum_{i\neq i^{\star}}^{n} \log\frac{\mb q_{A}^\star(\sigma(i),\mb A_{i,i^{\star}})}{\mb P(\sigma(i), \sigma(i^\star),A_{i,i^{\star}})}+\log\frac{q_{X}^\star}{p_{ X}(\sigma(i^\star))}.
\end{align}
Mimicking the approach described in \cite{yun2016optimal}, we can derive inequality $(17)$ in \cite{yun2016optimal} adapted with our new  defined log-ratio $\log{\frac{\mathrm{d}\mathbb{P}_{\Psi}}{\mathrm{d}\mathbb P_{\Phi} }}$ as the necessary condition for $\E[\epsilon(n)] \leq s$:
\begin{align}\label{ieq:proof_prt1_proof_s}
    \log(n/s)-\log(2/\alpha_{k_1})\leq \E_{\Psi}(\log{\frac{\mathrm{d}\mathbb{P}_{\Psi}}{\mathrm{d}\mathbb P_{\Phi} }})+\sqrt{\frac{4}{\alpha_{k_1}}\E_{\Psi}\left[\left(\log{\frac{\mathrm{d}\mathbb{P}_{\Psi}}{\mathrm{d}\mathbb P_{\Phi} }}-\E_{\Psi}[\log{\frac{\mathrm{d}\mathbb{P}_{\Psi}}{\mathrm{d}\mathbb P_{\Phi} }}]\right)^2\right]}.
\end{align}
Then the next part finishes the proof by estimating the bounds of $\E_{\Psi}[\log{\frac{\mathrm{d}\mathbb{P}_{\Psi}}{\mathrm{d}\mathbb P_{\Phi} }}]$ and $\E_{\Psi}\left[\left(\log{\frac{\mathrm{d}\mathbb{P}_{\Psi}}{\mathrm{d}\mathbb P_{\Phi} }}-\E_{\Psi}\log{\frac{\mathrm{d}\mathbb{P}_{\Psi}}{\mathrm{d}\mathbb P_{\Phi} }}\right)^2\right]$ respectively.

\newpage
\subsection{Part 2}
\begin{itemize}
    \item Upper bound for $\E_{\Psi}[\log{\frac{\mathrm{d}\mathbb{P}_{\Psi}}{\mathrm{d}\mathbb P_{\Phi} }}]$:\\
    Denote $L:=\log{\frac{\mathrm{d}\mathbb{P}_{\Psi}}{\mathrm{d}\mathbb P_{\Phi} }}$ and we obtain 
\begin{align}\label{ieq:L_upper}
    &\E_{\Psi}[L]\nonumber\\
    &:=\E_{\Psi}\left[\sum_{i\neq i^{\star}}^{n} \log\frac{ q_A^\star(\sigma(i),A_{i,i^{\star}})}{\mb P(\sigma(i), \sigma(i^\star),A_{i,i^{\star}})}+\log\frac{q_{X}^\star}{p_{ X}(\sigma(i^\star))}\right].\nonumber\\
    &=\underbrace{\E_{\Psi}\sum_{i\neq i^{\star}}^{n} \log\frac{q_{A}^\star(\sigma(i),A_{i,i^{\star}})}{\mb P(\sigma(i), \sigma(i^\star),A_{i,i^{\star}})}}_{\Gamma_1}+\underbrace{\E_{\Psi}\log\frac{q_{X}^\star}{p_{ X}(\sigma(i^\star))}}_{\Gamma_2}
    \end{align}
     $\Gamma_1$ has been carefully studied in \cite[Theorem 1]{yun2016optimal} such that:
     \begin{align*}
         \Gamma_1\leq (n+2\log^2(n)\log(\eta_1))D_{A}(k_1^\star,k_2^\star, \mb q_A^\star).
     \end{align*} Thus we will focus on $\Gamma_2$: By definition of $\Psi$ we have
     \begin{align*}
         \Gamma_2= D_{X}(k_1^\star,k_2^\star, \mb q_X^\star).
     \end{align*}
Combine $\Gamma_1$ and $\Gamma_2$ together we have
\[
\E_{\Psi}[L]\leq (n+2\log^2(n)\log(\eta_1))D(\bs\alpha, \mb P, \bs\mu).
\]
Furthermore,
repeating the procedure in \eqref{ieq:L_upper} leads to  
\begin{align}
    \E_{\Psi}[L]\geq (n-2\log^2(n)\log(\eta_1))D(\bs \alpha, \mb p, \bs\mu).\nonumber
\end{align}
Consequently we obtain
\begin{align}
    \abs{\E_{\Psi}[L]-n D(\bs \alpha, \mb p, \bs\mu)}\leq 2\log^2(n)\log(\eta)D(\bs \alpha, \mb p, \bs\mu).
\end{align}

\item Upper bounding  $\E_{\Psi}\left[\left(\log{\frac{\mathrm{d}\mathbb{P}_{\Psi}}{\mathrm{d}\mathbb P_{\Phi} }}-\E_{\Psi}\log{\frac{\mathrm{d}\mathbb{P}_{\Psi}}{\mathrm{d}\mathbb P_{\Phi} }}\right)^2\right]$:
Recall the notation of $L$ we obtain:
\begin{align} \label{ieq:pt2_proof_L_variance}
&\E_{\Psi}\left[\left(\log{\frac{\mathrm{d}\mathbb{P}_{\Psi}}{\mathrm{d}\mathbb P_{\Phi} }}-\E_{\Psi}\log{\frac{\mathrm{d}\mathbb{P}_{\Psi}}{\mathrm{d}\mathbb P_{\Phi} }}\right)^2\right]\nonumber\\
&=\E_{\Psi}[\left(L-\E_{\Psi}[L]\right)^2]\nonumber\\
&=\E_{\Psi}[L^2]-\left(\E_{\Psi}[L]\right)^2\nonumber\\
&\leq \E_{\Psi}[L^2]-\left(n-2\log^2(n)\right)^2 D^2(\bs \alpha, \mb p, \mu).
\end{align}
Thus, the following part remains to upper bound $\E_{\Psi}[L^2]$: Without loss of generality we assume $\sigma(i^\star)=k_1$, by definition we have
\begin{align}\label{ieq:pt2_proof_exp_L_sq}
    &\E_{\Psi}[L^2]\nonumber\\
    &=\E_{\Psi}[L^2|~\sigma(i^\star)=k_1]\nonumber\\
    &=\underbrace{\P(i^\star\leq \log^2(n))\E_{\Psi}\biggl[L^2\Bigg|~\sigma(i^\star)=k_1, i^\star\leq \log^2(n)\biggr]}_{\Gamma_1}+\underbrace{\P(i^\star> \log^2(n))\E_{\Psi}\biggl[L^2\Bigg|~\sigma(i^\star)=k_1, i^\star> \log^2(n)\biggr]}_{\Gamma_2}
\end{align}
For $\Gamma_2$ we have \begin{align}
    \Gamma_2&=\P(i^\star> \log^2(n))\E_{\Psi}\biggl[L^2\Bigg|~\sigma(i^\star)=k_1, i^\star> \log^2(n)\biggr]\nonumber\\
    &\leq \frac{1}{n^4}\E_{\Psi}\left[\left(\sum_{i\neq i^{\star}}^{n} \log\frac{ q_A^\star(\sigma(i),A_{i,i^{\star}})}{\mb P(\sigma(i), \sigma(i^\star),A_{i,i^{\star}})}+\log\frac{q_{X}^\star}{p_{ X}(\sigma(i^\star))}\right)^2\Bigg|~\sigma(i^\star)=k_1, i^\star> \log^2(n)\right]\nonumber\\
    &=\frac{1}{n^4}\E_{\Psi}\biggl[\left(\sum_{i\neq i^{\star}}^{n} \log\frac{ q_A^\star(\sigma(i),A_{i,i^{\star}})}{\mb P(\sigma(i), \sigma(i^\star),A_{i,i^{\star}})}\right)^2+\left(\log\frac{q_{X}^\star}{p_{ X}(\sigma(i^\star))}\right)^2\nonumber\\
    &\quad\quad\quad+2\left(\sum_{i\neq i^{\star}}^{n} \log\frac{ q_A^\star(\sigma(i),A_{i,i^{\star}})}{\mb P(\sigma(i), \sigma(i^\star),A_{i,i^{\star}})}\right)\left(\log\frac{q_{X}^\star}{p_{ X}(\sigma(i^\star))}\right)\Bigg|~\sigma(i^\star)=k_1, i^\star> \log^2(n)\biggr]\nonumber\\
    &\leq \frac{1}{n^4}\left[n^2\log^2(\eta_1)+\E_{x\sim q_X^\star}\left[\norm{\mb x-\bs \mu_{k_1}}2^2-\norm{\mb x-(1-t)\bs \mu_{k_1}-t\bs\mu_{k_2}}2^2\right]^2+n\log(\eta_1)\KL_{X}(q_{X}^\star, p_X(\sigma(i^\star)))\right]\nonumber\\
    &\lesssim \frac{1}{n^4}\left(n^2\log^2(\eta_1)+t(1-t)\left(\eta_2^4+\eta_2^2\right)\right),\nonumber\\
    &\lesssim\frac{\log^2(\eta_1)}{n^2} + \frac{\eta_2^4 + \eta_2^2}{n^4}.
\end{align}
Here, we use $\P(i^\star > \log^2(n))\leq n^{-4}$ in the second step, the independence of $\{\mb x\}_{i\in [L]}$ and $\mb A$ in the third step, equation \eqref{eqn:q_X} and \cref{ass:1} in the fourth step and $0 < t < 1$ in the last step. 
To proceeds we then focus on $\Gamma_1$:
\begin{align}  \label{ieq:pt2_proof_gamma1}
\Gamma_1&\leq\E_{\Psi}\biggl[L^2\Bigg|~\sigma(i^\star)=k_1, i^\star\leq \log^2(n)\biggr]\nonumber\\
&= \E_{\Psi}\biggl[\left(\sum_{i\neq i^{\star}}^{n} \log\frac{q_{A}(\sigma(i),A_{i,i^{\star}})}{\mb P(\sigma(i), \sigma(i^\star),A_{i,i^{\star}})}+\log\frac{q_{X}^\star}{p_{ X}(\sigma(i^\star))}\right)^2\Bigg|~\sigma(i^\star)=k_1, i^\star\leq \log^2(n)\biggr]\nonumber\\
&=\E_{\Psi}\biggl[\left(\sum_{i> i^{\star}}^{n} \log\frac{q_{A}(\sigma(i),A_{i,i^{\star}})}{\mb P(\sigma(i), \sigma(i^\star),A_{i,i^{\star}})}+\log\frac{q_{X}^\star}{p_{ X}(\sigma(i^\star))}\right)^2\Bigg|~\sigma(i^\star)=k_1, i^\star\leq \log^2(n)\biggr]\nonumber\\
&\quad+\E_{\Psi}\biggl[\left(\sum_{i< i^{\star}} \log\frac{q_{A}(\sigma(i),A_{i,i^{\star}})}{\mb P(\sigma(i), \sigma(i^\star),A_{i,i^{\star}})}+\log\frac{q_{X}^\star}{p_{ X}(\sigma(i^\star))}\right)^2\Bigg|~\sigma(i^\star)=k_1, i^\star\leq \log^2(n)\biggr]\nonumber\\
&\quad+2\E_{\Psi}\biggl[\left(\sum_{i< i^{\star}} \log\frac{q_{A}(\sigma(i),A_{i,i^{\star}})}{\mb P(\sigma(i), \sigma(i^\star),A_{i,i^{\star}})}\right)\left(\sum_{i> i^{\star}}^{n} \log\frac{q_{A}(\sigma(i),A_{i,i^{\star}})}{\mb P(\sigma(i), \sigma(i^\star),A_{i,i^{\star}})}+\log\frac{q_{X}^\star}{p_{ X}(\sigma(i^\star))}\right)\nonumber\\
&\quad\quad\quad\Bigg|~\sigma(i^\star)=k_1, i^\star\leq \log^2(n)\biggr]\nonumber\\
&\leq \underbrace{\E_{\Psi}\biggl[\left(\sum_{i> i^{\star}}^{n} \log\frac{q_{A}(\sigma(i),A_{i,i^{\star}})}{\mb P(\sigma(i), \sigma(i^\star),A_{i,i^{\star}})}+\log\frac{q_{X}^\star}{p_{ X}(\sigma(i^\star))}\right)^2\Bigg|~\sigma(i^\star)=k_1, i^\star\leq \log^2(n)\biggr]}_{\Gamma_{11}}\nonumber\\
&\quad+\left(\log(n)\log(\eta_1)+\eta_2\right)^2+2\left(\log(n)\log(\eta_1)+\eta_2\right)D(\bs \alpha, \mb p, \mu).
\end{align}
Here we use the fact that any two nodes $i_1$ and $i_2$ satisfying $i_1 \neq i_2\geq i^\star$ are mutually independent in the derivation.  To upper bound $\Gamma_{11}$, we first have 
\begin{align}
     &\left(\sum_{i> i^{\star}}^{n} \log\frac{q_{A}(\sigma(i),A_{i,i^{\star}})}{\mb P(\sigma(i), \sigma(i^\star),A_{i,i^{\star}})}+\log\frac{q_{X}^\star}{p_{ X}(\sigma(i^\star))}\right)^2\nonumber\\
    &=\sum_{i,j>i^{\star}}\log\frac{q_{A}(\sigma(i),A_{i,i^{\star}})}{\mb P(\sigma(i), \sigma(i^\star),A_{i,i^{\star}})}\log\frac{q_{A}(\sigma(j),A_{j,i^{\star}})}{\mb P(\sigma(j), \sigma(i^\star),A_{j,i^{\star}})}+\left(\frac{n-i^\star}{n}\log\frac{q_{X}^\star}{p_{ X}(\sigma(i^\star))}\right)^2\nonumber\\
    &\quad+2\left(\frac{n-i^\star}{n}\log\frac{q_{X}^\star}{p_{ X}(\sigma(i^\star)}\right)\sum_{i>i^\star}\log\frac{q_{A}(\sigma(i),A_{i,i^{\star}})}{\mb P(\sigma(i), \sigma(i^\star),A_{i,i^{\star}})}
\end{align}
We thus complete the upper bound of $\Gamma_1$ by analyzing $\Gamma_{11}$:
\begin{align}\label{ieq:pt2_proof_gamma11}
    \Gamma_{11}&=\E_{\Psi}\biggl[\left(\sum_{i> i^{\star}}^{n} \log\frac{q_{A}(\sigma(i),A_{i,i^{\star}})}{\mb P(\sigma(i), \sigma(i^\star),A_{i,i^{\star}})}+\log\frac{q_{X}^\star}{p_{ X}(\sigma(i^\star))}\right)^2\Bigg|~\sigma(i^\star)=k_1, i^\star\leq \log^2(n)\biggr]\nonumber\\
    &=(n-i^\star)^2 D^2(\bs \alpha, \mb p, \bs\mu)+\sum_{i> i^\star}\E\left[\log^2\frac{q_{A}(\sigma(i),A_{i,i^{\star}})}{\mb P(\sigma(i), \sigma(i^\star),A_{i,i^{\star}})}\right]-\sum_{i> i^\star}\left(\E\left[\log\frac{q_{A}(\sigma(i),A_{i,i^{\star}})}{\mb P(\sigma(i), \sigma(i^\star),A_{i,i^{\star}})}\right]\right)^2\nonumber\\
    &\leq (n-i^\star)^2 D^2(\bs \alpha, \mb p, \bs\mu)+(n-i^\star)\log(\eta_1)D(\bs \alpha, \mb p, \bs\mu).
\end{align}
Plug \eqref{ieq:pt2_proof_gamma11} back to \eqref{ieq:pt2_proof_gamma1} and \eqref{ieq:pt2_proof_exp_L_sq} sequentially we conclude 
\begin{align}
    \E_{\Psi}[L^2]&\leq \frac{\log^2(\eta_1)}{n^2}+\frac{\eta_2^4+\eta_2^2}{n^4}+\left(\log(n)\log(\eta_1)+\eta_2\right)^2+2\left(\log(n)\log(\eta_1)+\eta_2\right)D(\bs \alpha, \mb p, \mu)\nonumber\\
    &\quad+(n-i^\star)^2 D^2(\bs \alpha, \mb p, \bs\mu)+(n-i^\star)\log(\eta_1)D(\bs \alpha, \mb p, \bs\mu)\nonumber\\
    &\lesssim n^2 D^2(\bs \alpha, \mb p, \bs\mu)+\left(n\log(\eta_1)+\eta_2\right)D(\bs \alpha, \mb p, \bs\mu)
\end{align}
Further we end the proof of upper bounding by plugging above inequalities back to \eqref{ieq:pt2_proof_L_variance}:
\begin{align}\label{ieq:pt2_proof_L_variance2}
    &\E_{\Psi}\left[\left(\log\frac{\mathrm{d}\mathbb Q}{\mathrm{d}\mathbb P}-\E_{\Psi}[\log\frac{\mathrm{d}\mathbb Q}{\mathrm{d}\mathbb P}]\right)^2\right]\nonumber\\
    &\leq \left(n+\log(n)\right)\left(\log(\eta_1)D(\bs \alpha, \mb p, \mu)-D^2(\bs \alpha, \mb p, \mu)\right).
\end{align}

\end{itemize}
Finally, we align \eqref{ieq:proof_prt1_proof_s}, \eqref{ieq:pt2_proof_L_variance2} and \eqref{ieq:L_upper} together to conclude     
\begin{align*}
        \liminf \frac{n D(\bs \alpha, \mb P, \boldsymbol{\mu})}{\log(n/s)}\geq 1.
    \end{align*}
Our proof concludes here.
\section{Proof of  \cref{remark_D}}\label{proof_remark_D}
In this section we provide the proof of \cref{remark_D}:
\begin{proof}
    Recall the definition of $D(\bs \alpha,\mb P,\boldsymbol{\mu})$. For any $k_1\neq k_2\in [K]$ define $D'(k_1, k_2,\bs \alpha,\mb P,\boldsymbol{\mu})$ such that:
    \begin{align}\label{def_D'}
       D'(k_1, k_2,\bs \alpha,\mb P,\boldsymbol{\mu}):= \min_{\substack{
    \mb q_{A}\in \mathbb R^{K\times L}\\
    q_{X}\in \mathcal{F}
    }} D_{A}(k_1, k_2,\mb P, \mb q_{A})+\frac{1}{n}D_{X}(k_1, k_2,\bs \mu, q_{X}).
    \end{align}
    Here, $D_{A}(k_1, k_2,\mb P, \mb q_{A})$ and $D_{X}(k_1, k_2,\bs \mu, q_{X})$ follow the definition in \eqref{def:Dx_Da}.
    Thus the proof reduces to showing: \begin{align}
        &D'(k_1, k_2,\bs \alpha,\mb P,\boldsymbol{\mu})\nonumber\\
        &=\max_{t\in [0,1]}\sum_{k}^K\alpha_k\biggl[\sum_{l=1}^{L}\left((1-t)\mb P(k, k_1,l)+t \mb P(k, k_2,l)-\mb P^{1-t}(k, k_1,l)\mb P^{t}(k, k_2,l)\right)+\frac{t(1-t)}{2n}\norm{\bs \mu_{k_1}-\bs\mu_{k_2}}2^2\biggr].
    \end{align}
    We begin the proof by studying the constrained version of \eqref{def_D'}: Note that $D'(k_1, k_2,\bs \alpha,\mb P,\boldsymbol{\mu})$ actually reaches the minimum of the following problem:
    \begin{align}\label{prob:convex_d'}
        &\min_{\substack{
    \mb q_{A}\in \mathbb R^{K\times L}\\
    q_{X}\in \mathcal{F}
    }} \sum_{k=1}^K\alpha_k \KL_{A}\left(\mb q_{A}(k), \mb P(k, k_1)\right)+\frac{1}{n}\KL_{X}\left(q_{X}, p_{ X}(k_1)\right)\nonumber\\
    &\textit{s.t.}\quad \sum_{k=1}^K\alpha_k \KL_{A}\left(\mb q_{A}(k), \mb P(k, k_1)\right)+ \frac{1}{n}\KL_{X}\left(q_{X}, p_{ X}(k_1)\right)\nonumber\\
    &\quad>\sum_{k=1}^K\alpha_k \KL_{A}\left(\mb q_{A}(k), \mb P(k, k_2)\right)+ \frac{1}{n}\KL_{X}\left(q_{X}, p_{ X}(k_2)\right).
    \end{align}
    Thus, we express the lagrange dual form of \cref{prob:convex_d'} such that:
    \begin{align}
        \max_{t\geq 0}\min_{\substack{
    \mb q_{A}\in \mathbb R^{K\times L}\\
    q_{X}\in \mathcal{F}
    }}L(\mb q_{A}, q_{X}, t)
    \end{align}
    where \begin{align}\label{L_def:gamma12}
      &L(\mb q_{A}, q_{X}, t)\nonumber\\
      &=\underbrace{\sum_{k=1}^K\alpha_k \KL_{A}\left(\mb q_{A}(k), \mb P(k, k_1)\right)+t\left(\sum_{k=1}^K\alpha_k \KL_{A}\left(\mb q_{A}(k), \mb P(k, k_2)\right)-\sum_{k=1}^K\alpha_k \KL_{A}\left(\mb q_{A}(k), \mb P(k, k_1)\right)\right)}_{\Gamma_1(\mb q_{\mb A},t)}\nonumber\\
      &\quad+\frac{1}{n}\left[\underbrace{ \KL_{X}\left(q_{X}, p_{ X}(k_1)\right)+t\left( \KL_{X}\left(q_{X}, p_{ X}(k_2)\right)- \KL_{X}\left(q_{X}, p_{ X}(k_1)\right)\right)}_{\Gamma_2(q_{X},t)}\right]
    \end{align}
    for some postive variable $t>0$. Notice that $\Gamma_1\left(\mb q_{ A},t\right)$ has been completely studied in Claim $4$ in \cite{yun2016optimal}, due to the independence of the $\mb q_{A}, q_{X}$, the remaining part then focuses on solving $\Gamma_2(q_{X},t)$.
    
    Since $q_{X}$ represents a PDF that lies in the function space $\mathcal{F}: \mathbb R^{d}\rightarrow \mathbb R$. We denote $\frac{\delta F(p)}{\delta p}(\mb x)$ as the functional derivative of $F(p)$ with respect to $p$, $\forall p\in \mathcal{F}$. By definition, we have
    \begin{align}\label{eqn:condition_qx}
        &\frac{\delta \Gamma_{2}(q_{X}, t)}{\delta q_{X}}(\mb x)\nonumber\\
        &=(1-t)\left[\log(q_{X}(\mb x))+1-\log(p_{ X}(k_1)(\mb x))\right]+t\left[\log(q_{X}(\mb x))+1-\log(p_{ X}(k_2)(\mb x)\right].
    \end{align}
    Here we use $\frac{\delta \KL(q,p)}{\delta q}(\mb x)=\log(q(\mb x))+1-\log(p(\mb x))$ in the derivation. Let $q_X^\star(x):=\argmin_{q_X \in \mathcal{F}}\Gamma_2(q_X, t)$. Setting \eqref{eqn:condition_qx} equal to zero, we have:
    \begin{align}
        \log(q_{X}^\star(\mb x))=(1-t)\log(p_{ X}(k_1)(\mb x))+t\log(p_{ X}(k_2)(\mb x))-1.\nonumber
    \end{align}
    due to the convexity of $\Gamma_2(q_X, t)$.
    Note that $p_{ X}(k_1)$ and $p_{ X}(k_2)$ represent the PDF for $\mathcal{N}(\boldsymbol{\mu}_{k_1}, \mb I_d)$ and $\mathcal{N}(\boldsymbol{\mu}_{k_2}, \mb I_d)$ respectively. Consequently, after some calculation, we obtain 
    \begin{align}\label{eqn:q_X}
        q_{X}^\star(\mb x)=C\exp(-\norm{\mb x-(1-t)\bs \mu_{k_1}-t\bs \mu_{k_2}}2^2/2).
    \end{align}
    after some calculation. Here $C$ is a constant that is irelevent with $\mb x$. We assert $C=(2\pi)^{-d/2}$ by utilizing $q_{X}$ is an probability density function. 
    
    Substituting the expression of $q_{X}^\star$ into $\Gamma_2(q_{X},t)$ defined in \eqref{L_def:gamma12} yields:
    \begin{align}\label{eqn:gamma2}
        \Gamma_2(q_{X}^\star,t)=\frac{t(1-t)}{2}\norm{\bs \mu_{k_1}-\bs\mu_{k_2} }2^2.
    \end{align}
after some algebra. Here, we use      \cref{lem:gaussian_KL} and the fact that $q_{X}^\star$ is the PDF for $\mathcal{N}((1-t)\boldsymbol{\mu}_{k_1}+t\boldsymbol{\mu}_{k_2}, \mb I_d)$ in the derivation. We thus finish the proof by combining the expressions of $\Gamma_1(\mb q_A,t)$(see \cite[Claim $4$]{yun2016optimal}) and  $\Gamma_2(q_{X}^\star,t)$ together. 

\end{proof}
    

\section{Proof of \cref{lem:spec}}\label{sec:proof_lem_spec}
In this section we provide the proof of \cref{lem:spec}:
\begin{proof}
    Let $\mb S^{(K)}$ be the best rank-$K$ approximation of $\mb S$,  define $\hat{\sigma}(\cdot)$ and $\{\mb c_i\}_{i\in [n]}$ such that 
    \begin{align}\label{def:hat_sig}
        \hat{\sigma}(\cdot), \{\hat{\mb c_i}\}_{i\in [K]}:=\argmin_{\substack{\sigma(\cdot)\in \mathcal{F},\\
        \mb c_i\in \mathbb R^n}} \sum_{i\in [K]}\norm{\mb S^{(K)}_i-\mb c_{\sigma(i)}}2^2.
    \end{align} 
    By lemma \ref{lem:rank} we have $\hat{\sigma}=\bar{\sigma}\circ \pi$ for some permutation $\pi$. Thus we could then focues on $\hat{\sigma}$ in the remaining part of the proof. We start by bounding $\norm{\mb S^{(K)}-\mb M}{op}$ for the $\mb M$ defined in \eqref{eqn:exp_s}:
    \begin{align}
        \norm{\mb S^{(K)}-\mb M}{op}&\leq \norm{\mb S^{(K)}-\E\left[\mb S\right]}{op}+\norm{\E\left[\mb S\right]-\mb M}{op}\nonumber\\
        &\leq \norm{\mb S^{(K)}-\mb S}{op}+\norm{\mb S-\E\left[\mb S\right]}{op}+\norm{\E\left[\mb S\right]-\mb M}{op}\nonumber\\
        &\leq 2\norm{\mb S-\E\left[\mb S\right]}{op}+\norm{\E\left[\mb S\right]-\mb M}{op}\nonumber\\
        &\leq 2\left(\norm{\sum_{l=1}^L w^{(l)}\mb A^{(l)}-\E\left[\sum_{l=1}^Lw^{(l)}\mb A^{(l)}\right]}{op}+\norm{\frac{1}{n}\mb X\mb X^T-\frac{1}{n}\E(\mb X\mb X^T)}{op}\right)+1\nonumber\\
        &\lesssim \sqrt{n}.
    \end{align}
    with probability at least $1-cn^{-c'}$. Here we use tha fact that $\mb S^{(K)}$ is the best approximation of $\mb S$ in the third step, \cite[Lemma 8.5]{coja2010graph} and \cite[Lemma B.1]{loffler2021optimality} in the last step.
    Notice that $\rank~(\mb S^{(K)}-\mb M)\leq \rank(\mb S^{(K)})+\rank(\mb M)\leq 2K$ we further have 
    \begin{align}\label{ieq:upper_Sk_M}
        \norm{\mb S^{(K)}-\mb M}{F}\leq \sqrt{2k}\norm{\mb S^{(K)}-\mb M}{op}\lesssim \sqrt{Kn}.
    \end{align}
    with high probability. 
    On the other hand  define $\hat{\mb C}$ such that $\hat{\mb C}_i:=\hat{\mb c}_{\hat{\sigma}(i)}$, consequently we have
    $\hat{\mb C}$ is the solve of \eqref{def:hat_sig} :\begin{align}
        \norm{\hat{\mb C}-\mb S^{(K)}}{F}\leq \norm{\mb M-\mb S^{(K)}}{F}.
    \end{align}
    
    Consequently we have 
    \begin{align}
        \norm{\hat{\mb C}-\mb M}{F}\leq \norm{\hat{\mb C}-\mb S^{(K)}}{F}+\norm{\mb S^{(K)}-\mb M}{F}\leq 2\norm{\mb S^{(K)}-\mb M}{F}\lesssim\sqrt{Kn}.
    \end{align}
    with high probability.
    Here we use \eqref{ieq:upper_Sk_M} in the last step. Define set $S_m(\delta)$ such that \begin{align}
        S_m(\delta):=\left\{i\in [n]~|~\norm{\hat{\mb C_i}-\mb M_i}2\geq \frac{\delta}{2} \right\}.
    \end{align}
    where $\delta:=\min_{\mb M_i\neq \mb M_j}\norm{\mb M_i-\mb M_j}2$. It is easy to check any index $i\in [n]\setminus S_m(\delta)$ will not be misclassfied up to permutation. Thus we finish the proof by upper bound the size of $S_m(\delta)$:\begin{align}\label{ieq:s_upper}
        s(\hat{\sigma})\leq\textit{card}~(S_m(\delta))\leq \frac{4\norm{\hat{\mb C}-\mb M}{F}^2}{\delta^2}\lesssim\frac{kn}{\delta^2}.
    \end{align}
    By the definition of $\delta$ we have 
\begin{align}
    \delta&=\min_{\sigma(i)\neq \sigma(j)}\norm{\mb Z\left(\mb P_s+\frac{1}{n}\bs \mu^T\bs\mu\right)\mb z_i^T-\mb Z\left(\mb P_s+\frac{1}{n}\bs \mu^T\bs\mu\right)\mb z_j^T}2\nonumber\\
    &=\min_{\sigma(i)\neq \sigma(j)}\norm{\mb Z\left(\mb P_s+\frac{1}{n}\bs \mu^T\bs\mu\right)_{\sigma(i)}-\mb Z\left(\mb P_s+\frac{1}{n}\bs \mu^T\bs\mu\right)_{\sigma(j)}}2\nonumber\\
    &=\min_{\sigma(i)\neq \sigma(j)}\norm{\mb Z\bs \Sigma^{-1}\bs\Sigma\left(\left(\mb P_s+\frac{1}{n}\bs \mu^T\bs\mu\right)_{\sigma(i)}-\left(\mb P_s+\frac{1}{n}\bs \mu^T\bs\mu\right)_{\sigma(j)}\right)}2\nonumber\\
    &=\min_{\sigma(i)\neq \sigma(j)}\norm{\bs\Sigma\left(\left(\mb P_s+\frac{1}{n}\bs \mu^T\bs\mu\right)_{\sigma(i)}-\left(\mb P_s+\frac{1}{n}\bs \mu^T\bs\mu\right)_{\sigma(j)}\right)}2
\end{align}
where $\bs\Sigma:=\diag([n_1,n_2,\cdots,n_K])=n\left(\diag(\bs\alpha)+\mb \Delta)\mb I_K\right)$ for some $\mb \Delta$ i.e. $\norm{\mb \Delta}{op}=o(1)$. Here we use the orthonormality for the columns of $\mb Z\bs\Sigma^{-1}$ in the last step. Thus we further obtain\begin{align}\label{ieq:delta}
    \delta=(1+o(1))n\min_{\sigma(i)\neq \sigma(j)}\norm{\left(\mb P_s\right)_{\sigma(i)}-\left(\mb P_s\right)_{\sigma(j)}+\frac{1}{n}\left(\bs \mu_{\sigma(i)}-\bs \mu_{\sigma(j)}\right)^T\bs \mu}2.
\end{align}
Combine \eqref{ieq:s_upper} and \eqref{ieq:delta} together we finish the proof here.
\end{proof}

\section{Technical lemmas}
In this section we provide some technical lemmas that is essential to our main results.
\begin{lemma}\label{lem:D}
    Recall the definition of $D_{A}$, $D_{X}$ and $D(\bs \alpha,\mb P,\boldsymbol{\mu})$, one has 
    \begin{align}\label{eqn:lem_D}
        &D_{A}(k_1^\star, k_2^\star, \mb q_{A}^\star)=\sum_{k=1}^K\alpha_k \KL_{A}\left(\mb q_{A}^\star(k), \mb P(k, k_1)\right)= \sum_{k=1}^K\alpha_k\KL_{A}\left(\mb q_{A}^\star(k), \mb P(k, k_2)\right)\nonumber\\
        \textit{and},\quad&D_{X}(k_1^\star, k_2^\star, q_{X}^\star)=\KL_{X}\left(q_{X}^\star, p_{ X}(k_1)\right)= \KL_{X}\left(q_{X}^\star, p_{ X}(k_2)\right)
    \end{align}
    Here $\KL(\cdot,~\cdot)$ represent the KL divergence and \begin{align}
        (k_1^\star, k_2^\star, \mb q_{A}^\star, q_{X}^\star):=\argmin\limits_{\substack{k_1\neq k_2\\
    \mb q_{A}\in \mathbb R^{K\times L}\\
    q_{X}\in \mathcal{F}
    }} D_{A}(k_1, k_2, \mb q_{A})+\frac{1}{n}D_{X}(k_1, k_2, q_{X})
    \end{align}
\end{lemma}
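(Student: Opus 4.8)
The plan is to exploit the decoupled structure of the objective. For any fixed pair $k_1\neq k_2$, the quantity $D_{A}(k_1,k_2,\mb P,\mb q_{A})$ depends only on $\mb q_{A}$ and $D_{X}(k_1,k_2,\bs\mu,q_{X})$ only on $q_{X}$, so the inner minimization factors as $\min_{\mb q_A} D_A+\tfrac1n\min_{q_X} D_X$, and the outer minimization over $(k_1,k_2)$ merely selects $(k_1^\star,k_2^\star)$ without affecting which $\mb q_A^\star,q_X^\star$ are optimal for that pair. It therefore suffices to prove the claimed equalization separately for the two independent problems. Writing $f_i(\mb q_A):=\sum_{k}\alpha_k\KL_A(\mb q_A(k)\,\|\,\mb P(k,k_i))$ and $g_i(q_X):=\KL_X(q_X\,\|\,p_X(k_i))$ for $i=1,2$, we have $D_A=\max\{f_1,f_2\}$ and $D_X=\max\{g_1,g_2\}$.

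First I would record the convexity/minimizer structure. Since $q\mapsto\KL(q\,\|\,p)$ is strictly convex in its first argument, each $f_i$ is strictly convex on $\mathbb{R}_{+}^{K\times L}$ and each $g_i$ is strictly convex on $\mathcal F$; moreover $f_i$ attains its unique minimum value $0$ at $\mb q_A(k)=\mb P(k,k_i)$, and $g_i$ attains its unique minimum value $0$ at $q_X=p_X(k_i)$. These facts are exactly what drive the equalization.

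The core is a perturbation-by-contradiction argument, carried out for $D_A$ and then repeated verbatim for $D_X$. Let $\mb q_A^\star$ minimize $\max\{f_1,f_2\}$ and suppose, toward a contradiction, that the two terms are unequal, say $f_1(\mb q_A^\star)>f_2(\mb q_A^\star)$, so the max equals $f_1$. If $f_1(\mb q_A^\star)=0$, then $f_2(\mb q_A^\star)\le 0$ forces $f_2=0=f_1$, a contradiction; hence $f_1(\mb q_A^\star)>0$ and $\mb q_A^\star$ is not the minimizer $\mb P(\cdot,k_1)$ of $f_1$. Moving along the segment $\mb q_A(\lambda):=(1-\lambda)\mb q_A^\star+\lambda\,\mb P(\cdot,k_1)$, strict convexity makes $\lambda\mapsto f_1(\mb q_A(\lambda))$ strictly decreasing at $\lambda=0$, while $f_2(\mb q_A(\lambda))$ is continuous, so for all small $\lambda>0$ we still have $f_1>f_2$ and $\max\{f_1,f_2\}=f_1(\mb q_A(\lambda))<f_1(\mb q_A^\star)$, contradicting optimality. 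Thus $f_1(\mb q_A^\star)=f_2(\mb q_A^\star)$, which is the first line of \eqref{eqn:lem_D}. The identical argument with $g_1,g_2$, the segment $(1-\lambda)q_X^\star+\lambda\,p_X(k_1)$ (a convex combination of PDFs, hence again in $\mathcal F$), and the strict convexity of $\KL_X$ as a functional yields the second line.

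The step I expect to be most delicate is the $D_X$ part, since it lives in the infinite-dimensional space $\mathcal F$: I would need to confirm that $\KL_X(\cdot\,\|\,p_X(k_i))$ is finite and strictly convex along the chosen segment — which holds because both endpoints are full-support (Gaussian, resp.\ the Gaussian-type density identified in the proof of \cref{remark_D}), ruling out any $\KL=\infty$ pathology — that the minimizer $q_X^\star$ exists, and that the convex combination remains a valid density. These are routine but must be asserted rather than silently imported from the finite-dimensional case. The only remaining case needing mention is the degenerate one where $\mb P(\cdot,k_1)=\mb P(\cdot,k_2)$ (resp.\ $\bs\mu_{k_1}=\bs\mu_{k_2}$), in which the two terms coincide trivially.
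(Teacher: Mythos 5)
Your proof is correct, and it follows the same high-level skeleton as the paper's: assume the two KL terms at the optimum are unequal, construct a perturbation that strictly decreases the max while the other argument ($q_X^\star$, resp.\ $\mb q_A^\star$) stays fixed, and contradict the minimality of $D(\bs\alpha,\mb P,\bs\mu)$. The difference is in the key mechanism. The paper's proof, at the crucial step, merely \emph{asserts} that "there should exist a $\mb q_A'$" (resp.\ $q_X'$) whose KL to $\mb P(\cdot,k_1^\star)$ is slightly decreased and whose KL to $\mb P(\cdot,k_2^\star)$ is increased by at most $\eps$, without ever constructing such an object --- and for the continuous part it only says the argument is "similar in spirit". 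You instead construct the perturbation explicitly as the convex combination $(1-\lambda)\mb q_A^\star+\lambda\,\mb P(\cdot,k_1)$ (resp.\ $(1-\lambda)q_X^\star+\lambda\,p_X(k_1)$) and invoke convexity of $\KL(\cdot\,\Vert\,p)$ in its first argument: the chord bound gives $f_1(\mb q_A(\lambda))\le(1-\lambda)f_1(\mb q_A^\star)$, while $f_2(\mb q_A(\lambda))\le(1-\lambda)f_2(\mb q_A^\star)+\lambda f_2(\mb P(\cdot,k_1))$ stays below the old max for small $\lambda$ because $f_2(\mb P(\cdot,k_1))$ is finite (Assumption~\ref{ass:1} for the discrete part, full-support Gaussians for the continuous part). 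This buys you two things the paper does not have: the perturbation existence step is actually proved rather than postulated, and the discrete and infinite-dimensional cases are handled by one uniform argument, with the only genuinely functional-analytic issues (finiteness of $\KL_X$ along the segment, the mixture remaining in $\mathcal F$) isolated and disposed of explicitly. What the paper's version buys in exchange is only brevity. One cosmetic note: strict convexity is not actually needed for your decrease step --- plain convexity plus $f_1(\mb P(\cdot,k_1))=0<f_1(\mb q_A^\star)$ already yields the strict chord inequality --- and, like the paper, you implicitly rely on the argmin over $\mathcal F$ being attained, which the lemma statement itself presupposes and the proof of \cref{remark_D} exhibits explicitly.
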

\begin{proof}
\item \paragraph{Proof of first equation in \eqref{eqn:lem_D}:}
    Here we mimic the approach in \cite{yun2016optimal} and finish the proof by contradiction: 
    Recall the definition of $D_{A}(k_1^\star, k_2^\star, \mb q_{A}^\star)$ we have
    \begin{align}
      D_{A}(k_1^\star, k_2^\star, \mb q_{A}^\star)=\max\left\{\sum_{k=1}^K\alpha_k \KL_{A}\left(\mb q_{A}^\star(k), \mb P(k, k_1)\right), \sum_{k=1}^K\alpha_k\KL_{A}\left(\mb q_{A}^\star(k), \mb P(k, k_2)\right)\right\}.\nonumber
    \end{align}
    Without loss of generality, we assume 
    \begin{align}
D_{A}(k_1^\star, k_2^\star, \mb q_{A}^\star)=\sum_{k=1}^K\bs \alpha _k\KL_{A}\left(\mb q_{A}^\star(k), \mb P(k, k_1^\star)\right)> \sum_{k=1}^K\bs \alpha_k\KL_{A}\left(\mb q_{A}^\star(k), \mb P(k, k_2^\star)\right).\nonumber
    \end{align}
    Thus there should exist a $k'\in [K]$ such that
    \begin{align}
        \KL_{A}\left(\mb q_{A}^\star(k'), \mb P(k', k_1^\star)\right)>\KL_{A}\left(\mb q_{A}^\star(k'), \mb P(k', k_2^\star)\right).\nonumber
    \end{align}
    Leverage the positivity of the KL-divergence we assert $\mb q_{A}^\star(k')\neq \mb P(k', k_1^\star)$ then there should exist a $\mb q_{A}'(\cdot,~\cdot)\in \mathbb R^{K\times L}$ satisfy the following two conditions
    \begin{align*}
        &\begin{cases}
            \KL_{A}(\mb q_{A}'(k),\mb P(k, k_1^\star))=\KL_{A}(\mb q_{A}^\star(k),\mb P(k, k_1^\star))&k\neq k'\\
            \KL_{A}(\mb q_{A}'(k),\mb P(k, k_1^\star))\in [\KL_{A}(\mb q_{A}^\star(k'),\mb P(k', k_1^\star))-\eps,\KL_{A}(\mb q_{A}^\star(k'),\mb P(k', k_1^\star))]&k=k'
        \end{cases}\\
        &\textit{and}\\
        &\begin{cases}
            \KL_{A}(\mb q_{A}'(k),\mb P(k, k_2^\star))=\KL_{A}(\mb q_{A}^\star(k),\mb P(k, k_2^\star))&k\neq k'\\
            \KL_{A}(\mb q_{A}'(k),\mb P(k, k_2^\star))\in [\KL_{A}(\mb q_{A}^\star(k'),\mb P(k', k_2^\star)),\KL_{A}(\mb q_{A}^\star(k'),\mb P(k', k_2^\star))+\eps]&k=k'
        \end{cases}
    \end{align*}
    where $\eps$ is an constant such that $\eps\in[0, 1/2\left(\KL_{A}\left(\mb q_{A}^\star(k'), \mb P(k', k_1^\star)\right)-\KL_{A}\left(\mb q_{A}^\star(k'), \mb P(k', k_2^\star)\right)\right)]$. Then we have 
    \begin{align}
        D_{A}(k_1^\star, k_2^\star, \mb q_{A}^\star)>\underbrace{\sum_{k=1}^K \bs \alpha _k\KL_{A}\left(\mb q_{A}'(k), \mb P(k, k_1^\star)\right)}_{:=D_{A}(k_1^\star, k_2^\star, \mb q_{A}')}> \sum_{k=1}^K\bs \alpha_k\KL_{A}\left(\mb q_{A}'(k), \mb P(k, k_2^\star)\right).\nonumber
    \end{align}
    thus further leads to:
    \begin{align}
        D(\bs \alpha,\mb P,\boldsymbol{\mu})=D_{A}(k_1^\star, k_2^\star, \mb q_{A}^\star)+D_{X}(k_1^\star, k_2^\star, \mb q_X^\star)>D_{A}(k_1^\star, k_2^\star, \mb q_{A}')+D_{X}(k_1^\star, k_2^\star, \mb q_X^\star).\nonumber
    \end{align} 
    which is contradicted by the definition of $D(\bs \alpha,\mb P,\boldsymbol{\mu})$.
\item \paragraph{Proof of second equation in \eqref{eqn:lem_D}:} The proof here is in spirits similar to the previous part via contradicting the definition of $D(\bs \alpha,\mb P,\boldsymbol{\mu})$ by constructing a new distribution function $q_{X}'$:
Recall the definition of the $D_{X}(k_1^\star, k_2^\star, \mb q_{A}^\star)$:
\begin{align}
    D_{X}(k_1^\star, k_2^\star, q_{X}^\star):=\max\left\{ \KL_{X}\left(q_{X}^\star, p_{ X}(k_1^\star)\right), \KL_{X}\left(q_{X}^\star, p_{ X}(k_2^\star)\right)\right\}.\nonumber
\end{align}
W.L.O.G we assume 
\begin{align}
 \KL_{X}\left(q_{X}^\star(k), p_{ X}(k_1^\star)\right)>\KL_{X}\left(q_{X}^\star(k), p_{ X}(k_2^\star)\right).
\end{align}
Similarly to the first part, we could also build a new measure $q_{X}'(\cdot,\cdot)$ such that 
\begin{align}
    D_{X}(k_1^\star, k_2^\star, q_{X}^\star)>\underbrace{\KL_{X}\left(q_{X}', p_{ X}(k_1^\star)\right)}_{:=D_{X}(k_1^\star, k_2^\star,q_{X}')}>\KL_{X}\left(q_{X}'(k), p_{ X}(k_2^\star)\right).\nonumber
\end{align}
thus further leads to:
    \begin{align}
        D(\bs \alpha,\mb P,\boldsymbol{\mu})=D_{A}(k_1^\star, k_2^\star, \mb q_{A}^\star)+D_{X}(k_1^\star, k_2^\star, \mb q_X^\star)>D_{A}(k_1^\star, k_2^\star, \mb q_{A}^\star)+D_{X}(k_1^\star, k_2^\star, \mb q_X').\nonumber
    \end{align} 
    which is a contradiction against the definition of $D(\bs \alpha,\mb P,\boldsymbol{\mu})$. Until then our proof ends.
\end{proof}
\begin{lemma}[KL divergence for two Gaussian distributions]\label{lem:gaussian_KL}
    Consider two Gaussian probability distributions with unit variance, having means $\bs \mu_1$ and $\bs \mu_2$, and PDFs denoted as $p_1$ and $p_2$, respectively.  It follows that one has 
    \begin{align}
        \KL(p_1,p_2)=\frac{\norm{\bs\mu_1-\bs\mu_2}2^2}{2}.
    \end{align}
    Here $\KL(p,q):=\E_{p}\log(\frac{p}{q})$ represents the KL-divergence for two continuous probability distribution.
\end{lemma}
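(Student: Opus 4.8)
The plan is to evaluate $\KL(p_1,p_2)=\E_{p_1}[\log(p_1/p_2)]$ directly from the definition, using that both densities are Gaussian with identity covariance and hence fully explicit. This is the natural route since the log-ratio of two Gaussian densities admits a clean closed form.

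First I would write $p_i(\bx)=(2\pi)^{-d/2}\exp(-\norm{\bx-\bs\mu_i}2^2/2)$ for $i\in\{1,2\}$ and form the log-ratio. The normalizing constants $(2\pi)^{-d/2}$ cancel, so that $\log\big(p_1(\bx)/p_2(\bx)\big)=\tfrac12\big(\norm{\bx-\bs\mu_2}2^2-\norm{\bx-\bs\mu_1}2^2\big)$. Expanding the two squared norms, the quadratic term $\norm{\bx}2^2$ cancels, and the log-ratio collapses to the affine function $\bx^\top(\bs\mu_1-\bs\mu_2)+\tfrac12\big(\norm{\bs\mu_2}2^2-\norm{\bs\mu_1}2^2\big)$.

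Second I would take the expectation under $p_1$, that is with $\bx\sim\mc N(\bs\mu_1,\bI_d)$. Because the log-ratio is affine in $\bx$, only the first moment $\E_{p_1}[\bx]=\bs\mu_1$ enters and the covariance never appears; substituting gives $\KL(p_1,p_2)=\bs\mu_1^\top(\bs\mu_1-\bs\mu_2)+\tfrac12\big(\norm{\bs\mu_2}2^2-\norm{\bs\mu_1}2^2\big)$. Collecting these three terms yields exactly $\tfrac12\norm{\bs\mu_1-\bs\mu_2}2^2$, which is the claimed identity.

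There is no genuine obstacle here: the computation is elementary. The only point worth flagging is the cancellation of the $\norm{\bx}2^2$ term, which renders the log-ratio affine in $\bx$ and makes the expectation immediate, so that no second-moment (covariance) information about the Gaussian is ever required.
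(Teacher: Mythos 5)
Your proposal is correct and follows essentially the same route as the paper's proof: both compute $\E_{p_1}\log(p_1/p_2)$ directly, use the cancellation of the $\norm{\mb x}{2}^2$ term so that the log-ratio is affine in $\mb x$, and then evaluate the expectation using only the first moment $\E_{p_1}[\mb x]=\bs\mu_1$. No discrepancies to report.
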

\begin{proof}
    By definition we have
    \begin{align}
        \KL(p_1,p_2)&=\E_{\mb x\sim p_1}\log(\frac{p_1(\mb x)}{p_2(\mb x)})\nonumber\\
        &=\frac{1}{2}\E_{\mb x\sim p_1}\left(\norm{\mb x-\bs \mu_2}2^2-\norm{\mb x-\bs \mu_1}2^2\right)\nonumber\\
        &=\frac{1}{2}\left(\norm{\bs \mu_2}2^2-\norm{\bs \mu_1}2^2\right)+\E_{x\sim p_1}\left(\bs \mu_1-\bs \mu_2\right)^T\mb x\nonumber\\
        &=\frac{1}{2}\left(\norm{\bs \mu_2}2^2-\norm{\bs \mu_1}2^2\right)+\left(\bs \mu_1-\bs \mu_2\right)^T\bs \mu_1\nonumber\\
        &=\frac{\norm{\bs\mu_1-\bs\mu_2}2^2}{2}.
    \end{align}
\end{proof}
    
\begin{lemma}\label{lem:rank}
    Let $\mb S^{(K)}$ as the best rank-$k$ approximation of $\mb S$  proposed in algorithm \ref{alg:sp}, denote $\hat{\sigma}(\cdot)$ and $\{\mb c_i\}_{i\in [n]}$ such that:
    \begin{align}
        \hat{\sigma}(\cdot), \{\mb c_i\}_{i\in [n]}:=\argmin_{\substack{\hat{\sigma}(\cdot)\in \mathcal{F},\\
        \mb c_i\in \mathbb R^n}} \sum_{i\in [n]}\norm{\mb S^{(K)}_i-\mb c_{\hat{\sigma}(i)}}2^2.
    \end{align}
    Recall the definition of $\bar{\sigma}(\cdot)$ in \eqref{alg:sp} we have $\pi(\bar{\sigma}(i))=\hat{\sigma}(i)$ for some permutation $\pi(\cdot)$.
\end{lemma}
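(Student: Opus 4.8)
The plan is to show that the two $k$-means problems --- the one defining $\hat\sigma$ on the columns of $\mb S^{(K)}$ and the one defining $\bar\sigma$ on the columns of $\mb Q=\bar{\mb U}_K^\top\mb S$ --- have identical objective values for every admissible assignment $\sigma$, so that their minimizers must coincide up to a relabeling of clusters. The whole argument rests on the fact that an orthonormal frame preserves Euclidean distances.

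First I would record the algebraic identity linking the two representations. Since $\mb S$ is symmetric with eigendecomposition $\mb S=\sum_{i}\sigma_i\bar{\mb u}_i\bar{\mb u}_i^\top$, its best rank-$K$ approximation is $\mb S^{(K)}=\bar{\mb U}_K\bar{\mb U}_K^\top\mb S=\bar{\mb U}_K\mb Q$, using $\bar{\mb U}_K^\top\bar{\mb U}_K=\mb I_K$. Hence the $i$-th column obeys $\mb S^{(K)}_{\cdot i}=\bar{\mb U}_K\mb Q_{\cdot i}$: each data point of the first problem is the corresponding data point of the second, lifted into $\mathbb{R}^n$ by the frame $\bar{\mb U}_K$.

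Next, fix any assignment $\sigma$ and minimize over the centers. For the $\mb Q$-problem the optimal centers are the per-cluster means $\bar{\bs\theta}_k=|\{i:\sigma(i)=k\}|^{-1}\sum_{i:\sigma(i)=k}\mb Q_{\cdot i}$. For the $\mb S^{(K)}$-problem the centers range over all of $\mathbb{R}^n$, but since every column $\mb S^{(K)}_{\cdot i}$ lies in $\col(\bar{\mb U}_K)$, so does each cluster centroid, and by linearity it equals $\bar{\mb U}_K\bar{\bs\theta}_k$. Substituting and using the isometry $\norm{\bar{\mb U}_K\mb v}2=\norm{\mb v}2$ for $\mb v\in\mathbb{R}^K$ gives, for this fixed $\sigma$, $\sum_i\norm{\mb S^{(K)}_{\cdot i}-\bar{\mb U}_K\bar{\bs\theta}_{\sigma(i)}}2^2=\sum_i\norm{\mb Q_{\cdot i}-\bar{\bs\theta}_{\sigma(i)}}2^2$. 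Thus the minimized objectives, viewed as functions of $\sigma$ alone, are equal.

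Finally, equal objectives over a common feasible set of assignments force the sets of minimizers to coincide; invariance of $k$-means under relabeling (replacing $\sigma$ by $\pi\circ\sigma$ and permuting the centers leaves the objective unchanged) then yields $\hat\sigma=\pi\circ\bar\sigma$ for some $\pi\in S_K$. This lemma has no genuinely hard step; the only point requiring care is the claim that the unconstrained centroid in the $\mb S^{(K)}$-problem automatically lies in $\col(\bar{\mb U}_K)$, which is immediate because an arithmetic mean of vectors in a subspace remains in that subspace, so the reduction to the $K$-dimensional problem loses no generality. One should also confirm that $\mathcal F$ and $\mathcal F_a$ denote the same class of admissible assignment functions, which I take as given.
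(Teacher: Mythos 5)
Your proof is correct and follows essentially the same route as the paper's: both rest on the identity $\mb S^{(K)}=\bar{\mb U}_K\mb Q$ and the isometry $\norm{\bar{\mb U}_K\mb v}2=\norm{\mb v}2$ to identify the two $k$-means objectives, hence their minimizers up to relabeling. If anything, your write-up is slightly more complete than the paper's, since you explicitly verify the direction the paper leaves implicit---that the optimal centers of the $\mb S^{(K)}$-problem automatically lie in $\col(\bar{\mb U}_K)$ because cluster centroids of points in a subspace remain in that subspace.
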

\begin{proof}
    By definition of $\bar{\sigma}(\cdot)$ and $\mb Q$ in \cref{alg:sp} we have 
    \begin{align*}
        \sum_{i\in [n]}\norm{\mb Q_i-\bs \theta_{\bar{\sigma}(i)}}2^2&=\sum_{i\in [n]}\norm{\mb U_K\mb Q^{(K)}_i-\mb U_k \bs \theta_{\bar{\sigma}(i)}}2^2\nonumber\\
        &=\sum_{i\in [n]}\norm{\mb S^{(K)}_i-\mb U_K \bs \theta_{\bar{\sigma}(i)}}2^2
    \end{align*}
    thus there exist $\hat{\sigma}(i)$ such that $\mb c_{\hat{\sigma}(i)}:=\mb U_K \bs \theta_{\bar{\sigma}(i)}$. Here we use $\mb U_K$ is a orthogional matrix is the first step and definition of $\mb S^{(K)}$ in the second step.
\end{proof}
\end{document}